\documentclass{article}
\usepackage[utf8]{inputenc}
\usepackage{mathtools}
\title{Hutchinson Trace
Estimation for High-Dimensional and High-Order Physics-Informed Neural Networks}

\usepackage[a4paper,top=3cm,bottom=2cm,left=2.3cm,right=2.3cm,marginparwidth=1.75cm]{geometry}
\usepackage{graphicx}
\usepackage{verbatim}
\usepackage{hyperref}       
\usepackage{url}            
\usepackage{booktabs}       
\usepackage{amsfonts}       
\usepackage{nicefrac}       
\usepackage{microtype}      
\usepackage{amsmath}
\usepackage{multirow}
\usepackage{amsthm}
\usepackage{amssymb}
\newtheorem{theorem}{Theorem}[section]
\newtheorem{remark}{Remark}

\usepackage{subfigure}
\usepackage{times}
\usepackage{color}
\newcommand{\bx}{\boldsymbol{x}}
\newcommand{\bv}{\boldsymbol{v}}


\begin{document}
\author{Zheyuan Hu\thanks{Equal Contribution} \ \thanks{Department of Computer Science, National University of Singapore, Singapore, 119077 (\href{mailto:e0792494@u.nus.edu}{e0792494@u.nus.edu},\href{mailto:kenji@nus.edu.sg}{kenji@nus.edu.sg})} \and Zekun Shi\footnotemark[1] \ \footnotemark[2] \and  George Em Karniadakis\thanks{Division of Applied Mathematics, Brown University, Providence, RI 02912, USA (\href{mailto:george\_karniadakis@brown.edu}{george\_karniadakis@brown.edu})} \ \thanks{Advanced Computing, Mathematics and Data Division, Pacific Northwest National Laboratory, Richland, WA, United States} \and Kenji Kawaguchi\footnotemark[2]}

\date{}

\maketitle
\begin{abstract}
Physics-Informed Neural Networks (PINNs) have proven effective in solving partial differential equations (PDEs), especially when some data are available by seamlessly blending data and physics. However, extending PINNs to high-dimensional and even high-order PDEs encounters significant challenges due to the computational cost associated with automatic differentiation in the residual loss function calculation. Herein, we address the limitations of PINNs in handling high-dimensional and high-order PDEs by introducing the Hutchinson Trace Estimation (HTE) method.
Starting with the second-order high-dimensional PDEs, which are ubiquitous in scientific computing, HTE is applied to transform the calculation of the entire Hessian matrix into a Hessian vector product (HVP). This approach not only alleviates the computational bottleneck via Taylor-mode automatic differentiation but also significantly reduces memory consumption from the Hessian matrix to an HVP's scalar output. We further showcase HTE's convergence to the original PINN loss and its unbiased behavior under specific conditions.
Comparisons with the Stochastic Dimension Gradient Descent (SDGD) highlight the distinct advantages of HTE, particularly in scenarios with significant variability and variance among dimensions. We further extend the application of HTE to higher-order and higher-dimensional PDEs, specifically addressing the biharmonic equation. By employing tensor-vector products (TVP), HTE efficiently computes the colossal tensor associated with the fourth-order high-dimensional biharmonic equation, saving memory and enabling rapid computation.
The effectiveness of HTE is illustrated through experimental setups, demonstrating comparable convergence rates with SDGD under memory and speed constraints. Additionally, HTE proves valuable in accelerating the Gradient-Enhanced PINN (gPINN) version as well as the Biharmonic equation.  Overall, HTE opens up a new capability in scientific machine learning for tackling high-order and high-dimensional PDEs.
\end{abstract}

\section{Introduction}
Physics-Informed Neural Networks (PINNs) \cite{raissi2018forward} have made significant strides in solving partial differential equation (PDE) problems in scientific computing, particularly for low-dimensional equations. This is attributed to their powerful fitting capability \cite{kawaguchi2016deep}, generalization \cite{kawaguchi2017generalization}, and stability in optimization \cite{kingma2014adam}. However, despite the demonstrated capacity of neural networks in modeling high-dimensional data in image and text domains, PINNs' exploration of high-dimensional and high-order PDEs has been relatively limited. The primary challenge arises from the enormous computational cost involved in the automatic differentiation process of calculating the residual loss function steps for high-dimensional and high-order PDEs under the PINNs framework.

Solving problems related to high-dimensional PDEs is crucial, considering the curse-of-dimensionality, which is prevalent in various fields such as the Black-Scholes equation in mathematical finance for option pricing, the Hamilton-Jacobi-Bellman (HJB) equation in optimal control, and the Schrödinger equation in quantum physics. Despite the availability of specialized high-dimensional PDE solvers for different domains, such as DeepBSDE \cite{han2018solving} and Deep Splitting Method \cite{beck2021deep} for high-dimensional parabolic equations and specific algorithms for the Hamilton-Jacobi equation \cite{darbon2020overcoming,darbon2016algorithms}, a unified and effective approach for solving {\em general} high-dimensional PDEs is lacking.

PINNs is a versatile method capable of handling arbitrary PDEs and being mesh-free, in principle it is not susceptible to the curse-of-dimensionality. Despite recent efforts to use PINNs for solving high-dimensional PDEs, such as Stochastic Dimension Gradient Descent (SDGD) \cite{hu2023tackling} and random smoothing \cite{he2023learning,hu2023bias}, these approaches have their limitations. SDGD attempts to reduce each stochastic gradient's computational cost and memory consumption by sampling dimensions. However, when significant variability and variance among dimensions exist, SDGD's stochastic gradients may suffer from substantial variance, hindering convergence. As for random smoothing methods, they smooth the original neural network with Gaussian noise, impacting the network's expressive power and making it less effective in handling non-smooth PDE solutions.

To address the limitations of PINN in tackling high-dimensional and even high-order PDEs, herein, we propose the use of Hutchinson Trace Estimation (HTE) \cite{hutchinson1989stochastic} to accelerate PINNs and reduce memory cost. HTE can be applied to various common PDEs, such as second-order parabolic equations and the biharmonic equation. Firstly, for second-order parabolic PDEs, the computational bottleneck lies in the need to calculate the entire Hessian, with its computational cost increasing quadratically with the dimensionality. HTE is seamlessly integrated into the PINNs framework to alleviate this central computational bottleneck. Specifically, HTE transforms the calculation of the entire Hessian matrix into a Hessian vector product (HVP). Unlike the full Hessian, which is a large matrix of dimensions $d \times d$, where $d$ is the dimensionality of the PDE problem, the output of HVP is a scalar, significantly reducing memory consumption. Moreover, HVP can efficiently utilize Taylor-mode automatic differentiation \cite{bettencourt2019taylormode} in JAX \cite{jax2018github} for rapid computation, achieving both acceleration and reduced memory usage.

Furthermore, we demonstrate that the loss of PINNs with HTE converges to the original version of PINN loss under certain conditions and is unbiased, providing a solid foundation for its convergence properties. We also discuss the distinctions, similarities, and applicability between HTE and SDGD, the most related methodologies. We present examples illustrating when HTE or SDGD is more suitable and how their variances depend on the PDE problem. After addressing second-order parabolic equations, we explore how HTE is even more efficient in higher-order and higher-dimensional scenarios, motivating the extension of HTE to the biharmonic equation.

Given that the biharmonic equation is a fourth-order high-dimensional equation, attempting to solve it directly with conventional PINNs would lead to a massive tensor of size $d^4$, where $d$ represents the dimensionality of the PDE problem, and 4 denotes the order of the equation. However, by extending the Hessian Vector Product (HVP) in HTE to a tensor-vector product (TVP), we can transform the computation of the entire colossal tensor into a scalar which is the TVP output. This not only saves a considerable amount of memory but also facilitates rapid computation using Taylor-mode differentiation. We also demonstrate that the extended HTE method provides an unbiased estimate for the biharmonic operator, and we analyze the sources of its variance. To emphasize the importance of properly implementing the HVP and TVP in the proposed HTE method in JAX, we present a pseudocode to familiarize readers with HTE and its efficient implementation.
 
Ultimately, we showcase the effectiveness of HTE in various experimental setups. Firstly, we demonstrate that HTE and SDGD exhibit similar convergence rates under the same memory constraints and speed limitations on a second-order nonlinear anisotropic parabolic equation. Notably, they successfully scale PINNs to a very high dimension (100,000 dimensions). Secondly, we illustrate how HTE can accelerate the more accurate Gradient-Enhanced PINN (gPINN). Since gPINN requires additional first-order derivatives of the original PINN residual, the computational load is substantial, but HTE mitigates this burden while keeping the improvement brought by gPINN. Lastly, we highlight the advantages of HTE in the context of higher-order, high-dimensional biharmonic equations.

This paper is arranged as follows. We present related work in Section 2. Then, we introduce the proposed HTE method in Section 3.
We present computational experiments in Section 4 and summarize the paper in Section 5.

\section{Related Work}

\textbf{Hutchinson Trace Estimation}. Hutchinson Trace Estimation (HTE) \cite{hutchinson1989stochastic} is an unbiased estimator of the trace of a matrix, which was first proposed for the influence matrix related to the Laplacian smoothing splines computation. Since its initial publication, progress has been made for improvement. Hutch++ \cite{meyer2021hutch++} reduces the variance of HTE by employing a low-rank approximation methodology, whose complexity is proved to be optimal among all matrix-vector query-based methods for trace estimation. Later, Persson et al. \cite{persson2022improved} adopted the Nyström approximation to improve Hutch++ further; \cite{skorski2021modern} provided a modern analysis of the HTE's error. Roosta et al. \cite{roosta2015improved} proved an improved estimation for the sample efficiency in HTE. In modern deep learning, HTE has also been applied to the Diffusion model \cite{song2021scorebased} to infer the probability density function of the model, where HTE can reduce the computation cost when dealing with the Jacobian of the Diffusion model, which can be a convolutional neural network or even a Transformer \cite{vaswani2017attention}, whose derivative is extremely costly. To the best of our knowledge, we are the first to apply HTE for efficient physics-informed neural networks (PINNs) computations for high-order PDEs and in high-dimensions.

\textbf{Efficient Automatic Differentiation}. To implement HTE efficiently, we adopt the Taylor-mode automatic differentiation (AD) \cite{bettencourt2019taylormode} in JAX \cite{jax2018github}, which is tailored for high-order AD. Indeed, the type of AD affects the speed and memory cost significantly, and the commonly used stacked forward/backward modes of AD are much slower. In addition to the superior Taylor-mode AD, other attempts exist to avoid the costly high-order derivatives. Randomized smoothing \cite{he2023learning,hu2023bias} randomly smooths the model using Gaussian noises for inference, and its derivatives can be simulated via Monte Carlo simulation. The generalization ability of a randomly smoothed model can be interpreted by the information bottleneck theory \cite{kawaguchi2023does}. Randomized AD \cite{oktay2021randomized} incorporated stochastic gradient descent (SGD) to accelerate and avoid the need for full AD.

\textbf{High-Dimensional PDE Solver}.
Several works have considered high-dimensional PDE solvers.
In \cite{wang20222}, the significance of $L^\infty$ loss and adversarial training in addressing high-dimensional Hamilton-Jacobi-Bellman equations was demonstrated.
The separable PINNs approach in \cite{cho2022separable} employs a structure that allows the residual points to be a tensor product of per-dimension points, thereby increasing the batch size. Nevertheless, when confronted with problems surpassing ten dimensions, memory usage becomes a serious bottleneck. 
DeepBSDE \cite{han2018solving, han2017deep} and its extensions \cite{beck2019machine, chan2019machine,henry2017deep,hure2020deep,ji2020three,becker2021solving} are grounded in the classical BSDE interpretation for specific high-dimensional parabolic PDEs, leveraging deep learning models to approximate the unknowns in the BSDE formulation.
The deep splitting method \cite{beck2021deep} unifies the splitting method with deep neural networks.
Chen et al. \cite{chen2021solving} solved forward and inverse problems of Fokker-Planck equations, including Brownian noise and Levy noise in high dimensions using PINNs.
FBSNN \cite{raissi2018forward} established a connection between high-dimensional parabolic PDEs and forward-backward stochastic differential equations, employing deep neural networks to learn the unknown solution.
The multilevel Picard methods \cite{beck2020overcoming,beck2020overcoming_ac,becker2020numerical,hutzenthaler2020overcoming, hutzenthaler2021multilevel} represent a nonlinear extension of Monte Carlo capable of solving parabolic PDEs under specific constraints. The deep Galerkin method (DGM) \cite{sirignano2018dgm} trains neural networks to satisfy the high-dimensional PDE operators and other conditions, where derivatives are estimated via Monte Carlo. In \cite{wang2022tensor, wang2022solving}, tensor neural networks were proposed, adopting a separable structure for efficient numerical integration in solving high-dimensional Schr"{o}dinger equations.
More recently, SDGD \cite{hu2023tackling} emerged as a method designed to sample dimensions in PDEs, aiming to scale up and accelerate high-dimensional PINNs. Randomized smoothing \cite{he2023learning,hu2023bias} adopts the smoothed model with Gaussian noise so that Monte Carlo can simulate its inference and derivatives to combat the curse of dimensionality and avoid the costly automatic differentiation in high-order and high dimensions. HTE is also used in scale up and speed up score-based diffusion models \cite{song2021scorebased,hu2024score}.

\textbf{Physics-Informed Machine Learning}. The methodology developed in this paper is based on the concept of Physics-Informed Machine Learning \cite{karniadakis2021physics}, especially Physics-Informed Neural Networks (PINNs) \cite{raissi2019physics}. PINNs model and approximate the PDE solution by neural networks as surrogate models, which are trained through the boundary and residual losses. These approaches are shown theoretically to discover the underlying solutions governed by the PDEs \cite{hu2021extended,mishra2020estimates,shin2020convergence}. PINNs have been successful in numerous fields in science and engineering \cite{cai2021physics,haghighat2021physics,yang2019adversarial,jagtap2022deep}, and effective PINNs variants have been proposed to deal with different applications and problem settings \cite{hu2022augmented,jagtap2020extended,jagtap2020adaptive,jin2021nsfnets,psaros2022meta}.

\section{Proposed Method}
\subsection{Preliminaries}
This paper focuses on employing Physics-Informed Neural Networks (PINNs) to address high-dimensional and high-order Partial Differential Equation (PDE) problems. Additionally, it introduces the Hutchinson Trace Estimation (HTE) to accelerate PINN and reduce its memory consumption. Below, we provide an introduction to PINNs and HTE.

\textbf{Physics-Informed Neural Networks (PINNs)}.
This paper focuses on employing PINNs \cite{raissi2019physics} solving partial differential equations (PDEs) defined on the domain $\Omega \subset \mathbb{R}^d$ with the boundary/initial condition on $\Gamma$ and the residual condition within $\Omega$:
\begin{equation}\label{eq:PDE}
\begin{aligned}
\mathcal{B}u(\bx)=B(\bx) \ \text{on}\ \Gamma, \qquad
\mathcal{L}u(\bx)=g(\bx) \ \text{in}\ \Omega,
\end{aligned}
\end{equation}

Given the boundary points $\{\bx_{b,i}\}_{i=1}^{n_b} \subset \Gamma$ and the residual points $\{\bx_{r,i}\}_{i=1}^{n_r} \subset \Omega$, PINNs minimize the discrepancy in the residual and on the boundary for the PINN neural network model $u_\theta$ parameterized by $\theta$:
\begin{equation}
\begin{aligned}
\mathcal{L}(\theta) &= \lambda_b \mathcal{L}_b(\theta) + \lambda_r \mathcal{L}_r(\theta)\\
&=\frac{\lambda_b}{n_b}\sum_{i=1}^{n_b} {|\mathcal{B}u_{\theta}(\bx_{b,i})-B(\bx_{b,i})|}^2 + \frac{\lambda_r}{n_r}\sum_{i=1}^{n_r} {|\mathcal{L}u_{\theta}(\bx_{r,i})-g(\bx_{r,i})|}^2,
\end{aligned}
\end{equation}
where $\lambda_b$ and $\lambda_r$ are the boundary and residual loss weights, respectively.

\textbf{Hutchinson Trace Estimation (HTE)}.
The trace of a matrix $A \in \mathbb{R}^{d \times d}$ can be randomly estimated as follows:
\begin{align}
\operatorname{Tr}(A) = \mathbb{E}_{\bv \sim p(\bv)}\left[ \bv^\mathrm{T} A \bv\right],
\end{align}
for all random variable $\bv \in \mathbb{R}^d$ such that $\mathbb{E}_{\bv \sim p(\bv)} [\bv\bv^T] = I$.
Therefore, the trace can be estimated by Monte Carlo:
\begin{equation}
\operatorname{Tr}(A)\approx \frac{1}{V}\sum_{i=1}^V \bv_i^\mathrm{T} A\bv_i,
\end{equation}
where each $\bv_i\in\mathbb{R}^d$ are $i.i.d.$ samples from $p(\bv)$.

There are several viable choices for the distribution $p(\bv)$ in HTE, such as the most common standard normal distribution. However, to minimize the variance of HTE, we opt for the Rademacher distribution as follows: for each dimension of the vector $\bv \sim p(\bv)$, it is a discrete probability distribution that has a 50\% chance of getting +1 and a 50\% chance of getting -1. The proof for the minimal variance is given in \cite{skorski2021modern}.

\subsection{HTE for High-Dimensional Second-Order Parabolic PDEs}
In this subsection, we introduce the second-order parabolic equation under consideration. Subsequently, we elaborate on how the HTE technique is seamlessly incorporated into the PINNs framework, facilitating accelerated convergence and reduced memory consumption. Following this, we delve into the theoretical properties of the HTE loss functions. Lastly, we discuss the efficient implementation of HTE, providing a JAX \cite{jax2018github} pseudocode for HTE to aid readers in realizing its efficiency.

\subsubsection{Methodology}
We focus on a class of second-order parabolic equations, which include the Fokker-Planck equation in statistical mechanics, the Black-Scholes equation in mathematical finance, the Hamilton-Jacobi-Bellman equation in optimal control, Schr\"{o}dinger equation in quantum physics, etc., all ubiquitous in science and engineering:
\begin{equation}\label{eq:FP}
\begin{aligned}
\partial_t u(\bx, t)+ {\operatorname{Tr}\left(\sigma\sigma^\mathrm{T}(\bx, t)\left(\operatorname{Hess}_{\bx}u\right)(\bx, t)\right)} + f(\bx, t, u, \nabla_{\bx}u) = 0,\quad \bx \in \mathbb{R}^d, t \in [0,T],
\end{aligned}
\end{equation}
where $u(\bx, t)$ is the unknown exact solution we wish to solve, $\operatorname{Hess}_{\bx}u$ denotes the Hessian matrix of $u$, $\sigma(\bx, t) \in \mathbb{R}^{d \times d}$ is a known matrix-valued function, and $f(\bx, t, u, \nabla_{\bx} u)$ is also a known scalar function. PDEs with this form are of great interest in high-dimensions addressing the curse-of-dimensionality; see \cite{beck2021deep,han2018solving,hu2023tackling,hu2023bias,raissi2018forward}.
PINN's memory and speed bottleneck are traces of the second-order Hessian part, which is high-dimensional and high-order. Concretely, the computational cost of automatic differentiation in PINNs increases exponentially with the order \cite{bettencourt2019taylormode}. Regarding dimensionality, for second-order parabolic equations, the size of the Hessian matrix increases quadratically with the dimension, making high dimensionality and high-order the primary computational bottlenecks.

To overcome the speed and memory bottleneck due to the Hessian trace, we can use the Hutchinson Trace Estimation (HTE) to estimate the trace in the equation for efficient PINNs since computing Hessian-Vector Product (HVP) is much faster and more memory-efficient than the full Hessian.
The memory cost of HTE is significantly smaller than that of full PINNs and full Hessian since the output of HTE is a scalar, which reduces to $O(1)$ memory cost compared to the $O(d^2)$ in the full Hessian.

More specifically, the residual loss of the original PINN \cite{raissi2018forward} on a point $(\bx, t)$ is 
\begin{equation}\label{eq:PINN}
L_{\text{PINN}}(\theta) = \frac{1}{2}\left[\operatorname{Tr}\left(A_{\theta}(\bx, t)\right) + B_{\theta}(\bx, t)\right]^2, 
\end{equation}
where
$
A_\theta(\bx,t) := \sigma\sigma^\mathrm{T}(\bx, t)\left(\operatorname{Hess}_{\bx}u_\theta\right)(\bx, t) \in \mathbb{R}^{d \times d}$, $B_\theta(\bx, t) := \partial_t u_\theta(\bx, t)+f(\bx, t, u_\theta, \nabla_{\bx}u_\theta) \in \mathbb{R}$, and $u_\theta(\bx, t)$ is the PINN neural network model parameterized by $\theta$.

The loss of the HTE replaces the Hessian trace part with a stochastic trace estimator:
\begin{equation}\label{eq:HTE}
L_{\text{HTE}}(\theta;\{\bv_i\}_{i=1}^V) = \frac{1}{2}\left(\frac{1}{V}\sum_{i=1}^V\bv_i^\mathrm{T}A_\theta(\bx,t)\bv_i + B_{\theta}(\bx, t)\right)^2, 
\end{equation}
where we adopt an HTE with batch size $V$, i.e., we sample $\{\bv_i\}_{i=1}^V$ which are $i.i.d.$ samples from the distribution $p(\bv)$ such that $\mathbb{E}_{\bv \sim p(\bv)}[\bv \bv^\mathrm{T}] = I$, which we consider the Rademacher distribution to minimize its variance \cite{skorski2021modern}.

Although HTE is an unbiased trace estimator, the HTE loss in equation (\ref{eq:HTE}) is biased due to the nonlinear mean square error loss function, which breaks the linearity of mathematical expectation, i.e., the HTE loss in equation (\ref{eq:HTE}) is biased while it converges to the exact PINN loss almost surely (a.s.) as $V \rightarrow \infty$.
To correct the bias, similar to the techniques in Hu et al. \cite{hu2023bias}, we are required to sample two sets of samples:
\begin{align}\label{eq:HTE_unbiased}
L_{\text{HTE, unbiased}}(\theta;\{\bv_i,\hat{\bv}_i\}_{i=1}^V) &= \frac{1}{2}\left(\frac{1}{V}\sum_{i=1}^V\bv_i^\mathrm{T}A_\theta(\bx,t)\bv_i + B_{\theta}(\bx, t)\right)\left(\frac{1}{V}\sum_{i=1}^V\hat{\bv}_i^\mathrm{T}A_\theta(\bx,t)\hat{\bv}_i + B_{\theta}(\bx, t)\right),
\end{align}
where $\{\bv_i\}_{i=1}^V$ and $\{\hat{\bv}_i\}_{i=1}^V$ are 2$V$ $i.i.d.$ samples from the distribution $p(\bv)$. 
The properties of the two loss functions are summarized in the following theorem.

\begin{theorem}\label{thm:unbiased}
The loss $L_{\text{HTE}}(\theta)$ in equation (\ref{eq:HTE}) converges almost surely (a.s.) to the exact PINN loss $L_{\text{PINN}}(\theta)$ in equation (\ref{eq:PINN}), as $V \rightarrow \infty$, i.e.,
\begin{align}
\mathbb{P}\left(\lim_{V \rightarrow \infty}L_{\text{HTE}}(\theta;\{\bv_i\}_{i=1}^V) = L_{\text{PINN}}(\theta)\right) = 1.
\end{align}
The loss $L_{\text{HTE, unbiased}}(\theta)$ in equation (\ref{eq:HTE_unbiased}) is an unbiased estimator for the exact PINN loss $L_{\text{PINN}}(\theta)$ in equation (\ref{eq:PINN}), i.e.,
\begin{align}
\mathbb{E}_{\{\bv_i, \hat{\bv}_i\}_{i=1}^V}\left[L_{\text{HTE, unbiased}}(\theta;\{\bv_i,\hat{\bv}_i\}_{i=1}^V)\right] = L_{\text{PINN}}(\theta).
\end{align}
\end{theorem}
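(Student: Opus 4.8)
The plan is to handle the two claims separately, in both cases exploiting the single structural fact behind the Hutchinson estimator: for any fixed matrix $A$ and any $\bv$ with $\mathbb{E}_{\bv \sim p(\bv)}[\bv\bv^\mathrm{T}] = I$, one has $\mathbb{E}[\bv^\mathrm{T}A\bv] = \operatorname{Tr}(A)$, which follows by writing $\bv^\mathrm{T}A\bv = \sum_{j,k}v_j A_{jk} v_k$ and taking expectations entrywise. Fixing $\theta$ and the point $(\bx,t)$, the matrix $A_\theta(\bx,t)$ and the scalar $B_\theta(\bx,t)$ are deterministic, so each summand $\bv_i^\mathrm{T}A_\theta\bv_i$ is an i.i.d.\ unbiased estimator of $\operatorname{Tr}(A_\theta)$.

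For the almost-sure convergence, I would first apply the Strong Law of Large Numbers to the i.i.d.\ sequence $\{\bv_i^\mathrm{T}A_\theta\bv_i\}_i$. The integrability hypothesis $\mathbb{E}[|\bv_i^\mathrm{T}A_\theta\bv_i|] < \infty$ is immediate: for the Rademacher distribution each $\bv_i \in \{-1,+1\}^d$, so $\bv_i^\mathrm{T}A_\theta\bv_i$ is bounded for the fixed matrix $A_\theta$. Hence $\frac{1}{V}\sum_{i=1}^V \bv_i^\mathrm{T}A_\theta\bv_i \to \operatorname{Tr}(A_\theta)$ almost surely. Since the map $x \mapsto \frac{1}{2}(x + B_\theta)^2$ is continuous, the continuous mapping theorem (applied pathwise on the a.s.\ convergent event) yields $L_{\text{HTE}}(\theta;\{\bv_i\}_{i=1}^V) \to \frac{1}{2}(\operatorname{Tr}(A_\theta)+B_\theta)^2 = L_{\text{PINN}}(\theta)$ almost surely, which is the first claim.

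For unbiasedness, I would write $S := \frac{1}{V}\sum_{i=1}^V \bv_i^\mathrm{T}A_\theta\bv_i$ and $\hat{S} := \frac{1}{V}\sum_{i=1}^V \hat{\bv}_i^\mathrm{T}A_\theta\hat{\bv}_i$, so that $L_{\text{HTE, unbiased}} = \frac{1}{2}(S+B_\theta)(\hat{S}+B_\theta)$. The crucial point is that $S$ and $\hat{S}$ are independent because $\{\bv_i\}$ and $\{\hat{\bv}_i\}$ are drawn as independent sample sets; hence $\mathbb{E}[(S+B_\theta)(\hat{S}+B_\theta)] = \mathbb{E}[S+B_\theta]\,\mathbb{E}[\hat{S}+B_\theta]$. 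Since $B_\theta$ is deterministic and $\mathbb{E}[S]=\mathbb{E}[\hat{S}]=\operatorname{Tr}(A_\theta)$ by the Hutchinson identity and linearity, each factor equals $\operatorname{Tr}(A_\theta)+B_\theta$, giving $\mathbb{E}[L_{\text{HTE, unbiased}}] = \frac{1}{2}(\operatorname{Tr}(A_\theta)+B_\theta)^2 = L_{\text{PINN}}(\theta)$.

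The main conceptual obstacle is understanding why the naive single-sample loss $L_{\text{HTE}}$ is biased while the two-sample version is not: expanding $\mathbb{E}[(S+B_\theta)^2]$ produces the extra term $\frac{1}{2}\operatorname{Var}(S)$, which is exactly the bias that the squared loss introduces through the variance of $S$. Splitting the square into a product of two independent copies replaces the cross term $\mathbb{E}[S^2]$ by $\mathbb{E}[S]\,\mathbb{E}[\hat{S}] = (\mathbb{E}[S])^2$, thereby eliminating the variance contribution. Beyond this, the only technical care needed is verifying the integrability condition for the SLLN, which the Rademacher (or sub-Gaussian) choice of $p(\bv)$ handles trivially.
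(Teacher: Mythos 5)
Your proposal is correct and follows essentially the same route as the paper: the strong law of large numbers applied to the i.i.d.\ quadratic forms $\bv_i^\mathrm{T}A_\theta\bv_i$ followed by continuity of the square for the almost-sure convergence, and independence of the two sample sets to factor the expectation of the product for unbiasedness. Your explicit verification of the SLLN integrability condition and the closing remark identifying the bias of $L_{\text{HTE}}$ with $\tfrac{1}{2}\operatorname{Var}(S)$ are welcome additions but do not change the argument.
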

\begin{proof}
The proof is presented in \ref{appendix:unbiased}.
\end{proof}

\subsubsection{Discussion on the Bias and Unbiased HTE Versions}
In HTE, there exists a bias-variance tradeoff as in the Randomized Smoothing PINNs in \cite{hu2023bias}, where the biased version (equation (\ref{eq:HTE})) involves fewer samples, resulting in lower variance, while the unbiased version (equation (\ref{eq:HTE_unbiased})) employs more samples, leading to excessive variance. In practice, we utilize the biased version as it proves sufficient. This is attributed to the negligible bias of HTE in practical applications if $V$ is large enough, whose bias is much less obvious than the bias in Randomized Smoothing PINN considered in \cite{hu2023bias}. HTE's convergence results are already satisfactory. Indeed, HTE loss converges to the exact PINN loss, so its bias is negligible if $V$ is large enough.

Regarding the theoretical understanding of the biased version of HTE, this bias is proportional to the HTE-PINN residual variance. Concretely, its bias against the exact PINN loss $L_{\text{PINN}}(\theta)$ can be computed as follows:
\begin{equation}
\begin{aligned}
&\quad\mathbb{E}_{\{\bv_i\}_{i=1}^V}\left[L_{\text{HTE}}(\theta;\{\bv_i\}_{i=1}^V)\right] - L_{\text{PINN}}(\theta) \\
&= \frac{1}{2}\mathbb{E}_{\{\bv_i\}_{i=1}^V}\left[\left(\frac{1}{V}\sum_{i=1}^V\bv_i^\mathrm{T}A_\theta(\bx,t)\bv_i + B_{\theta}(\bx, t)\right)^2\right] - \frac{1}{2}\left[\operatorname{Tr}\left(A_{\theta}(\bx, t)\right) + B_{\theta}(\bx, t)\right]^2 \\
&= \frac{1}{2}\mathbb{E}_{\{\bv_i\}_{i=1}^V}\left[\left(\frac{1}{V}\sum_{i=1}^V\bv_i^\mathrm{T}A_\theta(\bx,t)\bv_i + B_{\theta}(\bx, t)\right)^2\right] - \frac{1}{2}\left(\mathbb{E}_{\{\bv_i\}_{i=1}^V}\left[\frac{1}{V}\sum_{i=1}^V\bv_i^\mathrm{T}A_\theta(\bx,t)\bv_i + B_{\theta}(\bx, t)\right]\right)^2 \\
&=\frac{1}{2}\mathbb{V}_{\{\bv_i\}_{i=1}^V}\left[\frac{1}{V}\sum_{i=1}^V\bv_i^\mathrm{T}A_\theta(\bx,t)\bv_i + B_{\theta}(\bx, t)\right].
\end{aligned}
\end{equation}
which is the variance of the HTE-PINN residual defined as $\frac{1}{V}\sum_{i=1}^V\bv_i^\mathrm{T}A_\theta(\bx,t)\bv_i + B_{\theta}(\bx, t)$, which replaces the full PINN residual by HTE.
Suppose learning rate decay is used (all our experiments adopt this common practice in machine learning). In that case, the scale of the loss for PINN optimization will become smaller, i.e., we are basically optimizing $\epsilon^2 \cdot L_{\text{HTE}}(\theta;\{\bv_i\}_{i=1}^V)$ with a small and decaying $\epsilon > 0$ in practice. Thus, the biased version of HTE's bias becomes $\epsilon$ times the HTE-PINN residual variance, i.e. $\epsilon \times \frac{1}{2}\mathbb{V}_{\{\bv_i\}_{i=1}^V}\left[\frac{1}{V}\sum_{i=1}^V\bv_i^\mathrm{T}A_\theta(\bx,t)\bv_i + B_{\theta}(\bx, t)\right]$, which decays gradually due to decaying $\epsilon$ to ensure a decreasing variance for better convergence.

In practice, the success of the biased version should be attributed to its relatively low bias. I.e., The biased version is the fastest due to sampling only once, and it can converge well thanks to the low bias in practice, as long as its batch size $V$ in HTE is not too small.

The corresponding experiments for the bias and unbiasedness are conducted in Section \ref{exp:bias_unbias}.

\subsubsection{HTE Implementation Detail}

We provide an implementation of HTE in a few lines of code using JAX \cite{jax2018github}. Although HTE is mathematically formulated as an HVP, the implementation details significantly impact its speed and memory usage. For instance, a straightforward computation of the entire Hessian followed by a vector multiplication will not yield any efficiency gains, where the curse of dimensionality would still plague as the computation complexity scales with the PDE problem's dimensionality. This is because we will need to compute all the Hessian trace elements using the standard basis and the number of HVP to compute scales quadratically with the dimension of the equation. The key reason that HTE+HVP works is that, instead of computing the exact hessian diagonal, we just compute a stochastic approximation of it, and the computation cost is amortized over the optimization process. With HTE, the number of HVPs to compute is constant regarding PDE dimensionality. Hence, the computation complexity scales much slower.
Below, we show a sample code for highly efficient HTE in JAX Taylor-mode automatic differentiation (AD), based on fully forward mode \cite{bettencourt2019taylormode}, which is super fast due to the absence of backward mode autodiff. The algorithm of HTE aligns well with efficient Taylor mode AD to avoid the slow naive implementation; thus, it is employed to implement HTE. At the time of writing, JAX is the only mature AD framework that provides Taylor mode AD, so we chose to implement our method in JAX.
\begin{figure}[htbp]
    \centering
    \includegraphics[scale=1]{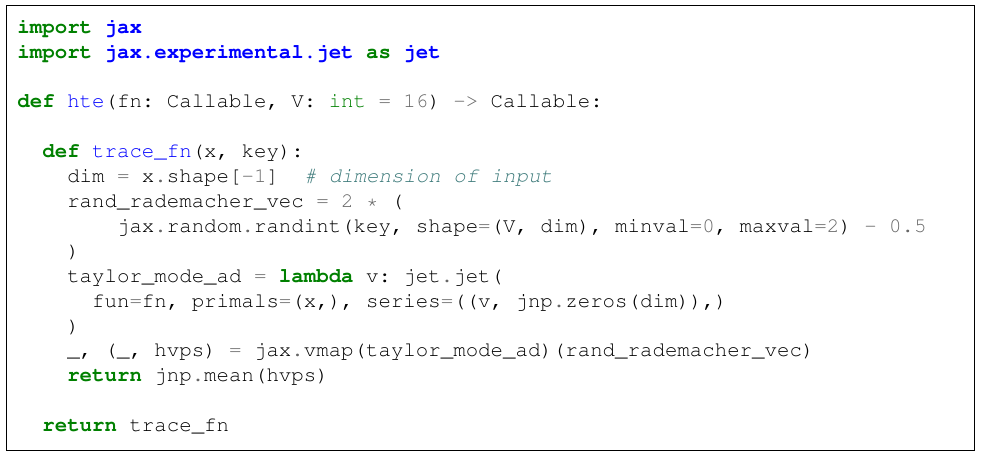}
\end{figure}
Specifically, given an input function fn, we return its HTE, $\frac{1}{V}\sum_{i=1}^V\bv_i (\text{Hess fn})\bv_i$, to estimate its Hessian trace.

Our approach and code implementation alleviate the cost of computing high-order derivatives by combining Taylor mode automatic differentiation (AD) and HTE, whose scaling is much better than the vanilla backward AD approaches widely adopted in the PINN literature.

In the AD framework, functions are implemented as the composition of primitives, which are simple functions with known analytical formulas for derivatives. We can write such function as $f = f_{n} \circ \dots \circ f_{1}$ where $f_{i}$ are the primitives, e.g., a neural network containing multiple linear layers and activations. Then by chain rule, the Jacobian of $f$ is simply the matrix product of the Jacobians of all the primitives $f_{i}$: $J_{f}=J_{f_{n}}\cdot \dots \cdot J_{f_{1}}$. Instead of computing the full Jacobians, in the AD framework, we usually compute a certain contraction of the derivative tensor, and it is computed by composing the analytical formula of the contraction considered for each primitive. The forward mode AD computes the Jacobi-Vector-Product (JVP): $J_{f}\bv=J_{f_{n}}(J_{f_{n-1}} (\dots (J_{f_{1}} \bv)))$, whereas the backward mode AD computes the Vector-Jacobian-Product (VJP): $\bv^{\mathrm{T}}J_{f}=(((\bv^\mathrm{T}J_{f_{n}})J_{f_{n-1}})\dots J_{f_{1}})$. The vector $\bv$ is called tangent in the case of JVP and cotangent in the case of VJP. From the perspective of differential geometry, the JVP of each primitive can be seen as the pushforward map from the input space to the next hidden space; hence, in AD, we also refer to the computation process of iteratively applying the JVP map of each primitive as ``pushing the tangent forward''. Similarly, VJP can be seen as the pullback of the input cotangent $\bv^{\mathrm{T}}$.

In the backward mode, one needs to first perform one forward pass to obtain the linearization points of all the primitive Jacobians, and this set of linearization points, also called the evaluation trace, needs to be stored in the memory. Furthermore, due to the forward pass, the number of sequential computations needed with the backward mode is $2n$ where $n$ is the number of primitives $\{f_i\}_{i=1}^n$, whereas for the forward mode, it is just $n$. This increases the computational complexity as well.

However, backward mode AD is the default choice for deep learning and PINN. This is because the Jacobian of a scalar loss function has the shape $1\times N$ where $N$ is the input dimension, which can be computed with one VJP, or $N$ JVP. Obviously, it would be much more efficient to do one VJP instead of $N$ JVP, even though for one contraction, VJP is more expensive than JVP in both speed and memory.

Therefore, the most widely adopted way in the PINN community to compute higher-order derivatives in the residual loss is to use repeated application of backward mode AD. But with each repeated application of backward mode AD, one doubles the amount of sequential computation and also the number of tensors to store in the evaluation trace for the backward pass. Furthermore, the new element to store in the evaluation trace has a dimension that grows exponentially to the derivative order. In this case, computing high-order derivatives with standard backward AD is indeed inefficient.
Repeated application of forward mode AD is also inefficient, as the size of the input tangent $\bv$ grows exponentially to the derivative order.

In this paper, we use Taylor mode AD, a generalization of forward mode AD that provides composition rules for a specific set of higher-order derivative tensor contractions related to the high-order Taylor expansion. In forward mode AD, only the JVP contraction is pushed forward, but in Taylor mode AD, we are pushing forward multiple contractions of derivatives tensor up to the highest orders. This avoids repeated application of AD when computing high-order derivatives.
And just like forward mode AD, the huge derivative tensor was never materialized in Taylor mode AD, and only the contraction (a scalar) was computed. The memory complexity is, therefore, much lower than backward mode AD since no evaluation trace is required, and only the contracted quantities (a scalar) are stored in memory. Furthermore, since the number of random vectors $V$ in the HTE is fixed and unrelated to the dimension, the curse of dimensionality is also alleviated.

\subsection{Comparison with SDGD}
Recently, the related work of SDGD \cite{hu2023tackling} has demonstrated robust capabilities in tackling high-dimensional PDEs and the curse-of-dimensionality. SDGD primarily relies on sampling dimensions to mitigate the computational and memory burdens associated with increasing dimensions. In contrast, HTE employs HVP to avoid the challenges posed by computing the full Hessian, whose cost grows quadratically with dimensionality. In this subsection, we will compare their variances and discuss their similarity.

\subsubsection{SDGD as a Special Case of HTE}
We show that SDGD can be regarded as a special case of HTE. Consider the discrete distribution $p(\bv)$ such that $\bv = \sqrt{d}\boldsymbol{e}_i$ for $i=1,2,\cdots,d$ with probability $1 / d$, where $\boldsymbol{e}_i$ denotes the standard basis vector with a one in the $i$th coordinate and zero elsewhere. Then, $\mathbb{E}_{\bv \sim p(\bv)}[\bv\bv^\mathrm{T}] = I$, and the corresponding estimator is $\operatorname{Tr}(A) \approx \frac{d}{|I|}\sum_{i\in I}A_{ii}$ where $I \subset \{1,2,\cdots,d\}$ is a multiset and $|I|$ denotes its cardinality, which is exactly sampling dimensionality as SDGD. Despite the theoretical similarity, SDGD differs in the following aspects:
\begin{itemize}
\item To minimize gradient variance, SDGD employs sampling without replacement, meaning each dimension is selected at most once. On the other hand, if we consider SDGD as a special case of HTE, it uses sampling with resampling, i.e., a dimension can be sampled several times; thus, the sampled dimension indices, denoted as $I$, form a multiset instead of a set in the original SDGD formulation.
\item While SDGD can be considered a special case of HTE, implementing SDGD using HTE is inefficient. This is because the probability distribution in this context takes values in a sparse vector, where most elements are zero. Optimization for sparse vectors can be applied in this scenario. However, HTE is generally implemented using non-sparse vectors, and this advantage cannot be fully exploited in this specific case.
\end{itemize}
Overall, HTE and SDGD are similar in some aspects but differ in other viewpoints.

\subsubsection{Variance Comparison between HTE and SDGD}
Both SDGD and HTE can speed up and scale up PINNs in high-dimensional and high-order PDEs. However, their source of variance differs due to various sampling and estimations.
Specifically, we focus on the second-order parabolic PDE given in equation (\ref{eq:FP}), and SDGD and HTE estimate the trace of 2nd-order derivatives
$
\operatorname{Tr}(A) = \sum_{i=1}^d A_{ii},
$
where $A = \sigma\sigma^\mathrm{T}(\bx, t)\left(\operatorname{Hess}_{\bx}u\right)(\bx, t)$. For simplicity, we drop $A$'s dependency on $\theta, \bx, t$. In the following theorems and their corresponding remarks, we prove the variance of SDGD and HTE and explain their various sources.
\begin{theorem}\label{thm:sdgd_var}
Given the SDGD estimator of the trace 
$
\operatorname{Tr}(A) \approx \frac{d}{B}\sum_{i\in I}A_{ii}
$
given an index set $I \subset \{1,2,\cdots,d\}$ whose cardinality $|I| = B$, where $B$ is the SDGD's batch size for the dimension, its variance is
\begin{equation}
\begin{aligned}
\mathbb{V}\left[\frac{d}{B}\sum_{i\in I}A_{ii}\right] = \frac{1}{\binom{d}{B}}\left(\sum_{I: |I| = B}\frac{d}{B}\sum_{i\in I}A_{ii} - \sum_{i=1}^dA_{ii}\right)^2,
\end{aligned}
\end{equation}
where $\binom{d}{B} = \frac{d(d-1)\cdots(d-B+1)}{B(B-1)\cdots1}$.
\end{theorem}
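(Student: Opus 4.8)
The plan is to treat the SDGD estimator $\widehat{T} := \frac{d}{B}\sum_{i\in I}A_{ii}$ as a deterministic function of a single random object, the index set $I$, which is drawn uniformly from the $\binom{d}{B}$ subsets of $\{1,\dots,d\}$ of cardinality $B$ (sampling without replacement, consistent with the SDGD convention stated earlier in the paper). Once this probability model is fixed, the whole statement reduces to computing $\mathbb{E}[\widehat{T}]$ and then unwinding the definition of variance for a discrete uniform law. I do not expect to need any structural property of $A$ beyond its diagonal entries $A_{ii}$.

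First I would establish unbiasedness, since the claimed formula is written as squared deviations from $\operatorname{Tr}(A)=\sum_{i=1}^d A_{ii}$, which is only the correct center if $\mathbb{E}[\widehat{T}]=\operatorname{Tr}(A)$. Writing $\widehat{T}=\frac{d}{B}\sum_{i=1}^d A_{ii}\,\mathbf{1}[i\in I]$ and using linearity of expectation, it suffices to evaluate the inclusion probability $\Pr[i\in I]$. Counting the size-$B$ subsets that contain a fixed index gives $\Pr[i\in I]=\binom{d-1}{B-1}/\binom{d}{B}=B/d$, so $\mathbb{E}[\widehat{T}]=\frac{d}{B}\cdot\frac{B}{d}\sum_{i=1}^d A_{ii}=\operatorname{Tr}(A)$.

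Next, because $I$ ranges over a finite set of $\binom{d}{B}$ equally likely outcomes, the identity $\mathbb{V}[\widehat{T}]=\mathbb{E}\big[(\widehat{T}-\mathbb{E}[\widehat{T}])^2\big]$ collapses into a finite average of squared deviations; substituting the mean from the previous step yields
$$\mathbb{V}\left[\frac{d}{B}\sum_{i\in I}A_{ii}\right] = \frac{1}{\binom{d}{B}}\sum_{I:\,|I|=B}\Big(\frac{d}{B}\sum_{i\in I}A_{ii} - \sum_{i=1}^d A_{ii}\Big)^2,$$
which is the asserted expression, the squared error being accumulated over each admissible index set $I$.

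Honestly, there is no serious obstacle: the statement is essentially the definition of variance specialized to the uniform distribution over $\binom{[d]}{B}$, and the only genuine computation is the inclusion-probability argument that pins the mean to $\operatorname{Tr}(A)$. The one point demanding care is bookkeeping rather than mathematics, namely matching the normalization $1/\binom{d}{B}$ to the sampling-without-replacement model and keeping the squared deviation \emph{inside} the sum over $I$, so that the formula is a true average of squared errors and not the square of an average; getting that placement right is what makes the expression a bona fide variance.
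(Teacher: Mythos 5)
Your proof is correct and follows the same direct route the paper intends: the paper's own proof consists of the single phrase ``the variance can be directly computed'' followed by a verbatim restatement of the formula, so you have supplied exactly the details it omits — the uniform-over-size-$B$-subsets sampling model, the inclusion probability $\Pr[i\in I]=\binom{d-1}{B-1}/\binom{d}{B}=B/d$ that pins the mean to $\operatorname{Tr}(A)$, and the expansion of the variance as an equally weighted average of squared deviations. One substantive remark: the expression you arrive at, with the square placed \emph{inside} the sum over $I$, is the correct variance, whereas the formula as typeset in the theorem (and repeated in the paper's proof) places the square \emph{outside} the sum over $I$; taken literally, since $\sum_{I:|I|=B}\frac{d}{B}\sum_{i\in I}A_{ii}=\binom{d}{B}\operatorname{Tr}(A)$, the printed formula evaluates to $\frac{1}{\binom{d}{B}}\bigl(\binom{d}{B}\operatorname{Tr}(A)-\operatorname{Tr}(A)\bigr)^2$, which is not a variance at all. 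Your closing caution about the placement of the square is therefore not mere bookkeeping — it identifies and silently repairs a typo in the statement, and your version is the one that should be read into the theorem.
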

\begin{proof}
The proof is presented in \ref{appendix:sdgd_var}.
\end{proof}
\begin{remark}
SDGD's variance comes from the variance between diagonal elements across different dimensions.
\end{remark}
\begin{theorem}\label{thm:hte_var}
Given the HTE estimator of the trace 
$
\operatorname{Tr}(A) \approx\frac{1}{V}\sum_{k=1}^V \bv_k^\mathrm{T} A\bv_k,
$ 
where $V$ is the HTE batch size and each dimension of $\bv_k \in \mathbb{R}^d$ is an $i.i.d.$ sample from the Rademacher distribution, then its variance is
$
\frac{1}{V} \sum_{i \neq j}A_{ij}^2.
$
\end{theorem}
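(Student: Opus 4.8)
The plan is to reduce the variance of the $V$-sample average to the variance of a single quadratic form, and then to compute the latter through the fourth moments of the Rademacher distribution. Since the vectors $\bv_1,\dots,\bv_V$ are i.i.d., the summands $\bv_k^\mathrm{T} A \bv_k$ are i.i.d. as well, so
\begin{equation}
\mathbb{V}\left[\frac{1}{V}\sum_{k=1}^V \bv_k^\mathrm{T} A \bv_k\right] = \frac{1}{V}\,\mathbb{V}\left[\bv^\mathrm{T} A \bv\right],
\end{equation}
where $\bv$ is a single Rademacher vector. It therefore suffices to evaluate $\mathbb{V}[\bv^\mathrm{T} A \bv]$ and divide by $V$ at the end.

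First I would record the basic Rademacher moments: $\mathbb{E}[v_i]=0$, $v_i^2=1$ almost surely, and the coordinates are independent, which gives $\mathbb{E}[v_i v_j] = \delta_{ij}$ and hence $\mathbb{E}[\bv^\mathrm{T} A \bv] = \sum_{i,j} A_{ij}\mathbb{E}[v_i v_j] = \operatorname{Tr}(A)$, recovering unbiasedness and fixing the squared mean that must be subtracted. The core of the argument is the second moment: expanding $(\bv^\mathrm{T} A\bv)^2 = \sum_{i,j,k,l} A_{ij}A_{kl}\, v_i v_j v_k v_l$ and taking expectations reduces everything to the fourth-moment tensor $\mathbb{E}[v_i v_j v_k v_l]$. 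Because every coordinate squares to $1$ and distinct coordinates are independent with zero mean, this expectation equals $1$ exactly when the four indices can be matched into equal pairs and $0$ otherwise.

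The key step is the careful bookkeeping of which index patterns survive: the ``all four equal'' pattern $i=j=k=l$ and the three ``two equal pairs'' patterns $i=j\neq k=l$, $i=k\neq j=l$, and $i=l\neq j=k$. Summing the surviving contributions, the diagonal term $\sum_i A_{ii}^2$ together with the pattern $i=j\neq k=l$, which contributes $\sum_{i\neq k} A_{ii}A_{kk}$, reassembles into $\left(\sum_i A_{ii}\right)^2 = \operatorname{Tr}(A)^2$, while the two remaining patterns contribute the off-diagonal sums $\sum_{i\neq j} A_{ij}^2$ and $\sum_{i\neq j} A_{ij}A_{ji}$. Subtracting the squared mean $\operatorname{Tr}(A)^2$ cancels the diagonal block exactly and leaves precisely the off-diagonal contribution; dividing by $V$ yields the claimed variance. (When the matrix whose quadratic form is being sampled is symmetric, the two off-diagonal sums coincide, so one should keep track of whether the relevant object is $A$ itself or the symmetrized Hessian product $\sigma^\mathrm{T}(\operatorname{Hess}_{\bx}u)\sigma$, which governs the overall constant.)

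I expect the main obstacle to be the combinatorial accounting in the fourth-moment step --- in particular, avoiding double-counting between the ``all equal'' and ``paired'' index patterns, and verifying that the diagonal-index contributions telescope exactly into $\operatorname{Tr}(A)^2$ so that they cancel against the squared mean rather than leaving a residual diagonal term. Once that cancellation is confirmed, the remaining off-diagonal sum is immediate, and the factor $1/V$ from the i.i.d. reduction completes the proof.
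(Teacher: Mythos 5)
Your approach is the same as the paper's: reduce to the single-sample variance by the i.i.d.\ decomposition, then expand the quadratic form and evaluate the Rademacher fourth-moment tensor. The i.i.d.\ reduction and the unbiasedness computation are fine. The problem is in the last step. Your own enumeration of surviving index patterns is correct and, carried through, gives
\begin{equation}
\mathbb{V}\left[\bv^\mathrm{T}A\bv\right] \;=\; \sum_{i \neq j} A_{ij}^2 \;+\; \sum_{i \neq j} A_{ij}A_{ji},
\end{equation}
since for $i\neq j$ and $k\neq l$ both pairings $(i=k,\,j=l)$ and $(i=l,\,j=k)$ make $\mathbb{E}[v_iv_jv_kv_l]=1$. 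Your closing sentence, which says that subtracting $\operatorname{Tr}(A)^2$ ``leaves precisely the off-diagonal contribution'' and ``yields the claimed variance,'' silently discards the second sum $\sum_{i\neq j}A_{ij}A_{ji}$. That term does not vanish in general, and for a symmetric $A$ (the Hessian-type matrices this theorem is applied to) it equals $\sum_{i\neq j}A_{ij}^2$, so the true variance is $\tfrac{2}{V}\sum_{i\neq j}A_{ij}^2$ --- twice the stated value. Your parenthetical remark about symmetrization shows you sensed this; it needs to be resolved, not deferred, because it changes the answer by a constant factor rather than merely the bookkeeping.

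For what it is worth, the paper's own proof commits the same omission more bluntly: it asserts that $\mathbb{E}[\bv_i\bv_j\bv_k\bv_l]=1$ only ``if $i=k$ and $j=l$ \ldots else equals zero,'' dropping the transposed pairing outright, which is how the stated formula $\tfrac{1}{V}\sum_{i\neq j}A_{ij}^2$ arises. So the gap in your write-up is exactly the point at which the paper's argument is itself in error; the fix is to keep both pairings and report the variance as $\tfrac{1}{V}\sum_{i\neq j}\bigl(A_{ij}^2+A_{ij}A_{ji}\bigr)$, specializing to $\tfrac{2}{V}\sum_{i\neq j}A_{ij}^2$ in the symmetric case, rather than to force agreement with the theorem as stated.
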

\begin{proof}
The proof is presented in \ref{appendix:hte_var}.
\end{proof}
\begin{remark}
HTE's variance only comes from the off-diagonal elements.
\end{remark}
Based on this observation, the following statements can be inferred:
\begin{itemize}
\item If the diagonal elements are similar, i.e., the PDE exact solution is symmetric, then SDGD has low variance.
\item If the off-diagonal elements of the Hessian are zero, i.e., different dimensions do not interact, then HTE is exact.
\item If the scales of off-diagonal elements are much larger than the diagonal ones, then HTE suffers from huge variance. 
\end{itemize}
Hence, we propose the following examples to showcase when HTE or SDGD can outperform each other and when they perform similarly:
\begin{itemize}
\item \textbf{SDGD fails but HTE is accurate}. Consider the 2D exact solution $f(x, y) = -kx^2 + ky^2$ and its Laplacian $\Delta f(x, y) = 0$, where $k \in \mathbb{R}^+$ is large enough. For SDGD, we choose the batch size of dimension as 1. Thus, the estimator of SDGD will either be $\frac{\partial^2 f(x, y)}{\partial x^2} = -2k$ or $\frac{\partial^2 f(x, y)}{\partial y^2} = 2k$, whose variance is $4k^2$. For HTE, since the off-diagonal elements in the Hessian matrix of $f$ are zero, HTE is exact, i.e., has zero variance. In brief, for this case, SDGD suffers from the variance $4k^2$ when $k$ is large, while HTE's variance is zero. 
\item \textbf{HTE fails but SDGD is accurate}. Consider the 2D exact solution $f(x, y) = kxy$ and its Laplacian $\Delta f(x, y) = 0$, where $k \in \mathbb{R}^+$ is large enough. Since the diagonal of its Hessian matrix is all zero, SDGD is exact, i.e., SDGD has zero variance since $\frac{\partial^2 f(x, y)}{\partial x^2} = \frac{\partial^2 f(x, y)}{\partial y^2} = 0$ is exact. For HTE, suppose that its batch size is $V=1$. Then HTE's estimator is $2v_1v_2\frac{\partial^2 f(x, y)}{\partial x\partial y} = 2v_1v_2k$ where $v_1$ and $v_2$ follows the Rademacher distribution. Thus, HTE's variance is $4k^2$. In brief, for this case, HTE suffers from the variance $4k^2$ when $k$ is large, while SDGD's variance is zero and SDGD is exact. 
\item \textbf{HTE and SDGD have the same nonzero variance}. Consider the 2D exact solution $f(x, y) = k(-x^2 + y^2 + xy)$ and its Laplacian $\Delta f(x, y) = 0$. For SDGD, we choose the batch size of dimension as 1. Thus, the estimator of SDGD will either be $\frac{\partial^2 f(x, y)}{\partial x^2} = -2k$ or $\frac{\partial^2 f(x, y)}{\partial y^2} = 2k$, whose variance is $4k^2$. For HTE, suppose that its batch size is $V=1$. Then HTE's estimator is $2v_1v_2\frac{\partial^2 f(x, y)}{\partial x\partial y} = 2v_1v_2k$, where $v_1$ and $v_2$ follows the Rademacher distribution. Thus, HTE's variance is $4k^2$. In brief, HTE and SDGD have the same nonzero variance $4k^2$.
\end{itemize}
However, we emphasize that the specific use case can be much more complex. Since we are applying SDGD or HTE to the neural network at each iteration, the characteristics of the function represented by the network determine which method is superior. This, in turn, is challenging to observe directly. Therefore, we provide these insights to readers as a basis for consideration. In practical applications, SDGD and HTE algorithms can be judiciously chosen based on some {\em a priori} knowledge of the PDE problem and the true solution. One can explore the simultaneous application of two methods and observe the rate of training loss reduction for each. Given the strong correlation between the error in approximating the true solution by PINN and the training loss \cite{hu2021extended}, algorithms with lower variance are more likely to converge faster.

\subsection{Applications to Biharmonic Equations}
Given that HTE demonstrates superior performance for high-order and high-dimensional PDEs, we illustrate its application to the biharmonic equation in this subsection. The biharmonic equation is a fourth-order PDE that can be defined in any dimension. We will demonstrate how to extend HTE, involving the generalization of HVP to a tensor-vector product, and provide a JAX pseudo-code for efficient implementation.

Concretely, the fourth-order and $d$-dimensional Biharmonic operator is given by
\begin{align}
\Delta^2 u(\bx) = \sum_{i=1}^d\sum_{j=1}^d \frac{\partial^4}{\partial\bx_i^2\partial \bx_j^2}u(\bx).
\end{align}
The biharmonic equation is crucial in understanding and simulating complex behaviors. It is a cornerstone in analyzing systems governed by higher-order differential equations, including modeling elastic membranes, thin plates, and stream functions in fluid dynamics.
We will attempt to generalize HTE, starting with the extension of HVP to a tensor-vector product (TVP), to solve high-dimensional and high-order Biharmonic equations efficiently.
\begin{theorem}\label{thm:biharmonic}
The biharmonic operator can be unbiasedly estimated using the following TVPs:
\begin{equation}
\begin{aligned}
\Delta^2 u(\bx) &= \frac{1}{3}\mathbb{E}_{\bv\sim \mathcal{N}(0, I)}\left[\frac{\partial^4}{\partial\bx^4}u(\bx)[\bv, \bv, \bv, \bv] \right],
\end{aligned}
\end{equation}
where $\mathcal{N}(0, I)$ is the $d$-dimensional unit Gaussian, and
\begin{equation}
\frac{\partial^4}{\partial\bx^4}u(\bx) \in \mathbb{R}^{d \times d \times d \times d}, \quad \text{where }\left[\frac{\partial^4}{\partial\bx^4}u(\bx)\right]_{ijkl} = \frac{\partial^4}{\partial\bx_i\partial\bx_j\partial\bx_k\partial\bx_l}u(\bx),
\end{equation}
and the TVP is defined as 
\begin{equation}
\frac{\partial^4}{\partial\bx^4}u(\bx)[\bv, \bv, \bv, \bv] = \sum_{i,j,k,l=1}^d\frac{\partial^4}{\partial\bx_i\partial\bx_j\partial\bx_k\partial\bx_l}u(\bx)\bv_i\bv_j\bv_k\bv_l,
\end{equation}
\end{theorem}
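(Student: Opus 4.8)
The plan is to expand the expected TVP by linearity, reduce the fourth moment of the Gaussian tangent to a sum of Kronecker deltas via Isserlis' (Wick's) theorem, and then collapse the resulting sums using the symmetry of mixed partial derivatives. Concretely, I would first pull the expectation inside the finite sum defining the TVP, so that
\[
\mathbb{E}_{\bv \sim \mathcal{N}(0,I)}\left[\frac{\partial^4}{\partial\bx^4}u(\bx)[\bv,\bv,\bv,\bv]\right] = \sum_{i,j,k,l=1}^d \frac{\partial^4 u(\bx)}{\partial\bx_i\partial\bx_j\partial\bx_k\partial\bx_l}\, \mathbb{E}[\bv_i\bv_j\bv_k\bv_l].
\]
The only probabilistic input needed is the fourth-moment tensor of a standard Gaussian vector, which I would obtain from Isserlis' theorem (or directly, by splitting into the all-distinct, two-distinct-pairs, and all-equal index cases):
\[
\mathbb{E}[\bv_i\bv_j\bv_k\bv_l] = \delta_{ij}\delta_{kl} + \delta_{ik}\delta_{jl} + \delta_{il}\delta_{jk}.
\]

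Substituting this identity produces three sums, one per pairing. Each Kronecker-delta pair forces two index identifications: the pairing $\delta_{ij}\delta_{kl}$ collapses the sum to $\sum_{i,k}\partial^4 u/\partial\bx_i^2\partial\bx_k^2$, which is exactly $\Delta^2 u(\bx)$; the pairing $\delta_{ik}\delta_{jl}$ yields $\sum_{i,j}\partial^4 u/\partial\bx_i\partial\bx_j\partial\bx_i\partial\bx_j$, and $\delta_{il}\delta_{jk}$ yields $\sum_{i,j}\partial^4 u/\partial\bx_i\partial\bx_j\partial\bx_j\partial\bx_i$. Assuming $u \in C^4$ so that Schwarz's theorem applies, the mixed partials commute and both of these latter sums also equal $\sum_{i,j}\partial^4 u/\partial\bx_i^2\partial\bx_j^2 = \Delta^2 u(\bx)$. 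Adding the three identical contributions gives $3\,\Delta^2 u(\bx)$, and dividing by $3$ recovers the claimed identity.

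I expect the main (though modest) obstacle to be the bookkeeping in this last step: recognizing that all three Wick pairings collapse to the same biharmonic sum, which is precisely what produces the normalizing factor $1/3$. A naive reading suggests that only the $\delta_{ij}\delta_{kl}$ pairing is ``diagonal'' in the biharmonic sense, so the key observation is that the permutation symmetry of the fourth-derivative tensor (valid under the $C^4$ smoothness assumption) makes the two off-diagonal pairings contribute identically. Beyond this, the argument requires only finite-dimensional linearity of expectation and the standard Gaussian fourth-moment formula, so no convergence or integrability subtleties arise.
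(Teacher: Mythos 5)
Your proposal is correct and follows essentially the same route as the paper: push the expectation through the finite sum, evaluate the Gaussian fourth-moment tensor, and collapse the result to $3\,\Delta^2 u(\bx)$ using the symmetry of the fourth-derivative tensor. The only difference is presentational — you package the moment computation via Isserlis' formula $\mathbb{E}[\bv_i\bv_j\bv_k\bv_l]=\delta_{ij}\delta_{kl}+\delta_{ik}\delta_{jl}+\delta_{il}\delta_{jk}$, whereas the paper enumerates the index cases directly (using $\mathbb{E}[v^4]=3$, $\mathbb{E}[v^2]=1$ and combinatorial multiplicities); your version is, if anything, slightly cleaner bookkeeping for the same argument.
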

\begin{proof}
The proof is presented in \ref{appendix:biharmonic}.
\end{proof}
Therefore, we can also obtain an unbiased estimate of the biharmonic operator through sampling. 
To help the reader understand the implementation of the tensor vector product, we provide a pseudocode implementation based on JAX \cite{jax2018github} Taylor Mode automatic differentiation \cite{bettencourt2019taylormode}. We want to emphasize again that the implementation details significantly impact the speed and memory
usage. A straightforward computation of the entire fourth-order derivative tensor followed by the vector multiplication will not yield any efficiency gains. Rather, we should leverage Taylor-mode automatic differentiation to allow JAX to implicitly compute the Tensor-Vector Product (TVP), thereby avoiding the massive memory consumption associated with explicit calculations. Furthermore, internal optimizations within JAX can be exploited to enhance speed by directly invoking JAX's TVP functionality and extracting only the output TVP, as opposed to the entire fourth-order derivative tensor.
\begin{figure}[htbp]
    \centering
    \includegraphics[scale=1]{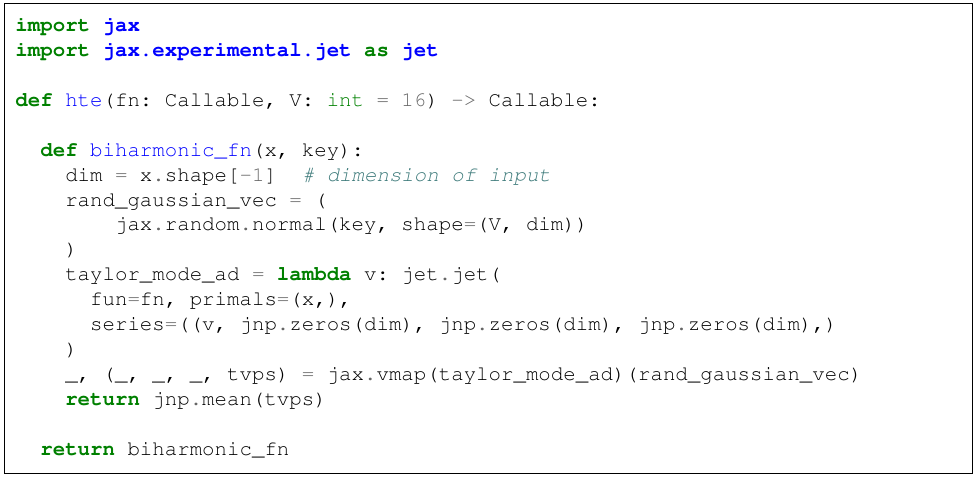}
\end{figure}
Similar to our previous treatment of the HTE for Hessian trace estimation, we analyze the HTE's variance when estimating the biharmonic operator. Here, we employ a Gaussian distribution instead of a Rademacher distribution, and thus both diagonal and off-diagonal elements contribute to a certain extent of variance. Consequently, a larger HTE batch size, denoted as $V$, is required when estimating the biharmonic operator.

\subsection{Applicability of HTE}
Next, we move to discuss the applicability of HTE.
HTE is more beneficial in higher dimensions and higher orders; otherwise, HTE's acceleration effect will be less obvious, although it is still faster. 
More generally, from a mathematical perspective, for a $d$-dimensional $n$th-order PDE, the pure autograd computation results in a tensor of dimensions $d^n$. For example, the Jacobian is of dimension $d$, and the Hessian is of dimension $d^2$. In contrast, the output of HTE are $V$ scalars when using $V$ as the batch size; thus, the output dimension is $V$. Consequently, as the dimension and order increase, the advantages of HTE become more pronounced.

\subsubsection{HTE's Application to Other Neural PDE Solvers}
Furthermore, our HTE approach covers various indispensable PDE differential operators. Hence, our HTE can be applied to other deep learning-based PDE solvers, such as the deep Ritz \cite{Weinan2017TheDR} and weak adversarial network (WAN) approaches \cite{zang2020weak} in addition to PINN \cite{raissi2019physics}. 

Specifically, the deep Ritz method \cite{Weinan2017TheDR} considers variational problems with the differential operator $\Vert \nabla_{\bx} u \Vert^2$, which can be efficiently estimated via
$
\Vert \nabla_{\bx} u(\bx) \Vert^2 = \mathbb{E}_{\bv \sim p(\bv)} |\bv^\mathrm{T}\nabla_{\bx}u(\bx)|^2,
$
where $\mathbb{E}_{\bv \sim p(\bv)}[\bv\bv^\mathrm{T}] = I$ as in our HTE method. The randomization of our HTE simplifies the original $O(d)$ ($d$ is the problem dimension) neural network gradients with respect to $\bx \in \mathbb{R}^d$ into $O(1)$ Jacobian vector products (JVPs), thereby conserving memory consumption and accelerating computation. Here, the JVP employed is essentially a specific instance of the proposed Tensor-Vector Product (TVP) in our HTE method, where the tensor corresponds to the Jacobian.

Besides, Weak Adversarial Networks (WANs) \cite{zang2020weak} consider PDE weak formulations, which consider the second-order elliptic PDEs and second-order parabolic PDEs. These PDEs considered by WAN are a subset of the cases in our HTE paper with the following form:
$
\partial_t u(\bx, t)+ {\operatorname{Tr}\left(\sigma\sigma^\mathrm{T}(\bx, t)\left(\operatorname{Hess}_{\bx}u\right)(\bx, t)\right)} + f(\bx, t, u, \nabla_{\bx}u) = 0, \bx \in \mathbb{R}^d, t \in [0,T],
$
Due to the same reason, our HTE can also be adopted to accelerate the convergence of WAN.
In summary, our HTE serves as an acceleration technique for general high-dimensional and high-order neural PDE solvers, making it widely applicable in PINN, deep Ritz method, and WAN.

\subsubsection{Discussion on HTE's Applicable PDE Types}
Furthermore, regarding the types of PDEs where HTE can be applied, our primary focus is on high-order and high-dimensional PDEs, where high-dimensional PDEs mainly consist of second-order elliptic and parabolic equations, which have wide-ranging applications \cite{beck2021deep,han2018solving,hu2023tackling,hu2023bias,raissi2018forward}, which covers the Fokker-Planck equation from statistical mechanics to describe the evolution of stochastic differential equations (SDEs), the Black-Scholes equation in mathematical finance, the Hamilton-Jacobi-Bellman equation in optimal control, and Schr\"{o}dinger equation in quantum physics. Although it is possible to design irregular equations that render HTE unusable, such equations lack practical applications, and even the uniqueness of their solutions is not theoretically guaranteed. 
To the best of our knowledge, the high-dimensional PDEs HTE can solve can cover the majority of real-world PDEs with multiple applications.
Additionally, for high-order equations, we consider high-order and high-dimensional biharmonic equations modeling elastic membranes, thin plates, and stream functions in fluid dynamics.

\subsubsection{Discussion on Low-Dimensional and High-Order PDEs}
Our HTE directly applies to high-order KdV and Kuramoto-Sivashinsky equations thanks to the efficient Taylor-mode automatic differentiation in HTE. 
More specifically, our HTE and the provided pseudocode can be directly applied to these high-order equations, with the distribution of the vector 
$\bv$ being Rademacher distribution. Just to note that these equations are one-dimensional in $\bx$, which is low-dimensional. In the case of one-dimensional but higher-order scenarios, the primary acceleration of HTE stems from Taylor-mode automatic differentiation, which, as previously mentioned, is significantly faster than the backward mode and forward mode conventionally employed in deep learning and PINN. The acceleration provided by our HTE becomes more pronounced as the dimensionality and order of the PDE problem increase. It often enables handling challenging cases where standard PINN would exceed GPU memory capacity or converge extremely slowly.

\section{Computational Experiments}
\subsection{Nonlinear Second-Order PDEs with Anisotropic and Inseparable Solutions}
This subsection considers nonlinear second-order Sine-Gordon PDEs with anisotropic and inseparable solutions.
The following complicated exact solution is considered in related work  \cite{hu2023tackling,hu2023bias} to test the convergence of high-dimensional PDE solvers:
\begin{equation}\label{eq:sol_2body}
u_{\text{exact}}(\bx) = \left(1 - \Vert \bx \Vert_2^2\right)\left(\sum_{i=1}^{d-1}  c_i \sin(\bx_i +\cos(\bx_{i+1})+\bx_{i+1}\cos(\bx_i))\right),
\end{equation}
where $c_i \sim \mathcal{N}(0, 1)$.
The term $1 - \Vert \bx \Vert_2^2$ is multiplied for a zero boundary condition on the unit ball to prevent the boundary loss from leaking the solution's information. The solution's $i$th and $(i+1)$th dimension correlate, also known as the two-body interaction.

Based on the idea, we further introduce the following complicated exact solutions to test various algorithms:
\begin{equation}\label{eq:sol_3body}
u_{\text{exact}}(\bx) = \left(1 - \Vert \bx \Vert_2^2\right)\left(\sum_{i=1}^{d-2}  c_{i} \exp(\bx_i\bx_{i+1}\bx_{i+2})\right),
\end{equation}
where the three-body interaction between each pair $(\bx_i, \bx_{i+1}, \bx_{i+2})$ is considered.

As for the nonlinear PDE, the Sine-Gordon equation with zero boundary condition is considered
\begin{align}
&\Delta u(\bx) + \sin\left(u(\bx) \right) = g(\bx), \quad \bx \in \mathbb{B}^d= \{\bx \ | \ \Vert \bx \Vert_2 < 1\},\\
&u(\bx) = 0, \quad \bx \in \mathbb{S}^{d-1} = \{\bx \ |\ \Vert \bx \Vert_2 = 1\},
\end{align}
where $g(\bx) = \Delta u_{\text{exact}}(\bx) + \sin\left(u_{\text{exact}}(\bx) \right)$, since its nonlinearity order is infinity, i.e., its nonlinearity is strong.

Here are the implementation details. The model is a 4-layer fully connected network with 128 hidden units activated by Tanh, which is trained via Adam \cite{kingma2014adam} for 10K epochs in the two-body case and 20K epochs in the three-body case, respectively, with an initial learning rate 1e-3, which linearly decays to zero at the end of the optimization. We select 100 random residual points at each Adam epoch and 20K fixed testing points uniformly from the unit ball. We adopt the following model structure to satisfy the zero boundary condition with hard constraint and to avoid the boundary loss \cite{lu2021physics}
$
 (1 - \Vert\bx\Vert_2^2) u_\theta(\bx),
$
where $u_\theta(\bx)$ is the neural neural network model. We repeat our experiment five times with five independent random seeds and report the mean and standard deviation of the errors.
For HTE, we choose $V=16$. For SDGD, we chose the batch size of dimension 16.

\begin{table}[htbp]
\footnotesize
\centering
\begin{tabular}{|c|c|c|c|c|c|c|}
\hline
Method & Metric & 100 D & 1,000 D & 5,000D & 10,000 D & 100,000 D \\ \hline\hline
\multirow{4}{*}{PINNs \cite{raissi2019physics}} & Speed & 1367.61it/s & 141.45it/s & 8.43it/s & N.A. & N.A. \\ \cline{2-7} 
 & Memory & 1123MB & 2915MB & 14283MB & $>$80GB & $>$80GB \\ \cline{2-7} 
 & Error\_1 & 6.24E-3$\pm$2.83E-3 & 1.20E-3$\pm$6.47E-4 & 2.64E-3$\pm$1.33E-3 & N.A. & N.A. \\ \cline{2-7} 
 & Error\_2 & 7.43E-3$\pm$4.56E-4 & 1.00E-3$\pm$1.45E-5 & 2.02E-4$\pm$2.45E-6 & N.A. & N.A. \\ \hline
\hline
 \multirow{3}{*}{SDGD \cite{hu2023tackling}} & Speed & 1853.67it/s &  1444.00it/s &  1243.91it/s & 1027.38it/s & 356.62it/s  \\ \cline{2-7} 
 & Memory &883MB & 885MB & 909MB & 929MB & 1173MB  \\ \cline{2-7} 
 & Error\_1 & 6.28E-3$\pm$2.55E-3 & 1.55E-03$\pm$6.72E-4 & 2.63E-3$\pm$1.21E-3 & 1.93E-3$\pm$7.81E-4 & 2.30E-3$\pm$1.33E-3 \\ \cline{2-7} 
 & Error\_2& 7.60E-3$\pm$6.00E-4 & 1.15E-3$\pm$2.80E-4 & 2.05E-4$\pm$3.98E-6 & 1.04E-4$\pm$4.30E-6 & 1.56E-5$\pm$9.57E-6 \\ \hline\hline
\multirow{3}{*}{HTE (Ours)} & Speed & 1792.03it/s  & 1566.19it/s & 1275.81it/s & 1066.10it/s  &  345.10it/s \\ \cline{2-7} 
 & Memory & 869MB & 890MB & 913MB & 927MB & 1089MB \\ \cline{2-7} 
 & Error\_1 & 6.30E-3$\pm$2.88E-3 & 1.25E-3$\pm$3.40E-4 & 2.61E-3$\pm$1.31E-3 & 1.84E-3$\pm$9.56E-4 & 2.38E-3$\pm$1.72E-3 \\ \cline{2-7} 
 & Error\_2 & 7.58E-3$\pm$5.39E-4 & 9.94E-4$\pm$3.23E-6 & 2.01E-4$\pm$2.27E-6 & 1.02E-4$\pm$3.88E-6 & 1.59E-5$\pm$9.29E-6 \\ \hline
\end{tabular}
\caption{Computational results for Sine-Gordon equations with the two-body (Error\_1) and three-body (Error\_2) exact solutions. Under the same computational constraints, SDGD \cite{hu2023tackling} and our HTE perform similarly, and they are all competitive against vanilla PINNs \cite{raissi2019physics}.}
\label{tab:2}
\end{table}

The computational results for the two-body and three-body cases are presented in Table \ref{tab:2}, where we compare vanilla PINN \cite{raissi2019physics}, SDGD \cite{hu2023tackling}, and HTE (Ours), in terms of speed (iteration per second), memory cost (MB), and relative $L_2$ error for the two-body case (Error\_1) and that for the three body case (Error\_2). Note that when solving for the two PDEs with different solutions, the speed and the memory costs are similar, and only the error differs, so we only report speed and memory once. 
We use MB as the memory unit because it is more precise. Typically, when checking the GPU memory using the Linux command ``nvidia-smi", the output is in MB, and converting it to GB may result in multiple decimal places, which is not as clear and accurate as a whole number in MB.

Regarding the regular PINNs, we observe a significant slowdown in speed and a rapid increase in memory consumption as the dimensionality rises. This is because the regular PINNs require the computation of the full Hessian, and the computational cost quadratically increases with dimension. As a result, regular PINNs exceed the memory limit of an A100 GPU in the case of 10K dimensions. For HTE and SDGD, we observe similar convergence speeds, results and memory consumption. In scenarios with 100D, 1K D, and 5K D, the final errors are comparable to regular PINNs, demonstrating that HTE can overcome the curse-of-dimensionality. It accelerates regular PINNs without compromising its effectiveness. Moreover, the PDEs we used cannot be simplified into lower-dimensional subproblems as they are inherently high-dimensional.  We also conducted tests on the PDE exact solutions of two-body and three-body interactions to demonstrate the efficacy of HTE across various high-dimensional problems.

\subsubsection{Additional Study 1: Effect of $V$ on HTE's Convergence}
\begin{table}[htbp]
\small
\centering
\begin{tabular}{|c|c|c|c|c|c|}
\hline
$V$ & 1 & 5 & 10 & 15 & 16 \\ \hline
Speed & 435.63it/s & 402.10it/s & 382.91it/s & 350.18it/s & 345.10it/s \\ \hline
Memory & 1073MB & 1075MB & 1080MB & 1085MB & 1089MB \\ \hline
Error\_1 & 3.02E-03$\pm$2.16E-03 & 2.70E-03$\pm$1.35E-03 & 2.55E-03$\pm$1.08E-03 & 2.41E-03$\pm$8.10E-04 & \textbf{2.38E-3$\pm$1.72E-3} \\ \hline
Error\_2 & 2.23E-05$\pm$5.00E-05 & 1.96E-05$\pm$2.91E-05 & 1.75E-05$\pm$3.28E-05 & 1.62E-05$\pm$1.25E-05 & \textbf{1.59E-5$\pm$9.29E-6} \\ \hline
\end{tabular}
\caption{Effects of the number of random vectors $V$ on the convergence for 100,000-dimensional Sine-Gordon equations with the two-body (Error\_1) and three-body (Error\_2) exact solutions.}
\label{tab:sine-gordon-v}
\end{table}
We conduct additional experiments for the case of the highest 100,000-dimensional Sine Gordon equation by following Section 4.1 in the main paper, except with different $V$. We aim to understand the effect of HTE batch size $V$ on its convergence, speed, and memory usage. Table \ref{tab:sine-gordon-v} shows that $V=1$ is sufficient for a good convergence and that the error is generally reduced by increasing the HTE batch size $V$. Increasing the batch size results in slightly larger memory consumption and slower speed, as larger batch sizes entail greater computational load. These results demonstrate the robustness of HTE under minimal batch size $V$ and that we can always improve HTE by increasing $V$ at the cost of more memory usage and slower speed. This result also demonstrates the convergence rate of HTE towards that of the full PINN as $V$ increases, noting that we observe that HTE with $V=16$ is essentially indistinguishable from the PINN.

The philosophy is that optimizing and learning part of the dimensions is enough to achieve good performances in high-dimensional PDEs. In the PINN community, people are sampling residual points to accelerate convergence and reduce computational cost. HTE is, in essence, sampling dimensions while keeping a minibatch stochastic gradient.

The design philosophy of HTE is also akin to stochastic gradient descent (SGD) in machine learning. Given a large number of data points, we do not compute and optimize the loss for each data point at every iteration. Instead, we select a minibatch to construct an unbiased gradient, achieving good results. Today, SGD is the standard method in machine learning. Similarly, in high-dimensional PDEs, we do not need to optimize each dimension at every iteration. The randomized gradient obtained through HTE is sufficient.

\subsubsection{Additional Study 2: Effect of Biased and Unbiased Versions of HTE}\label{exp:bias_unbias}

\begin{table}[htbp]
  \footnotesize
\centering
\begin{tabular}{|c|c|c|c|c|c|c|}
\hline
Method & Metric & 100 D & 1,000 D & 5,000D & 10,000 D & 100,000 D \\ \hline\hline
\multirow{4}{*}{Biased HTE} & Speed & 1792.03it/s & 1566.19it/s & 1275.81it/s & 1066.10it/s & 345.10it/s \\ \cline{2-7} 
 & Memory & 869MB & 890MB & 913MB & 927MB & 1089MB \\ \cline{2-7} 
 & Error\_1 & 6.30E-3$\pm$2.88E-3 & 1.25E-3$\pm$3.40E-4 & 2.61E-3$\pm$1.31E-3 & 1.84E-3$\pm$9.56E-4 & 2.38E-3$\pm$1.72E-3 \\ \cline{2-7} 
 & Error\_2 & 7.58E-3$\pm$5.39E-4 & 9.94E-4$\pm$3.23E-6 & 2.01E-4$\pm$2.27E-6 & 1.02E-4$\pm$3.88E-6 & 1.59E-5$\pm$9.29E-6 \\ \hline\hline
\multirow{4}{*}{Unbiased HTE} & Speed & 1619.81it/s & 1437.20it/s & 1139.63it/s & 958.64it/s & 303.73it/s \\ \cline{2-7} 
 & Memory & 881MB & 901MB & 920MB & 932MB & 1094MB \\ \cline{2-7} 
 & Error\_1 & \textbf{6.21E-3$\pm$3.01E-3} & \textbf{1.23E-3$\pm$4.02E-4} & \textbf{2.48E-3$\pm$1.50E-3} & \textbf{1.79E-3$\pm$8.24E-4} & \textbf{2.33E-3$\pm$1.67E-3} \\ \cline{2-7} 
 & Error\_2 & \textbf{7.43E-3$\pm$5.02E-4} & \textbf{8.72E-4$\pm$1.09E-5} & \textbf{1.85E-4$\pm$2.54E-6} & \textbf{9.84E-5$\pm$3.92E-6} & \textbf{1.45E-5$\pm$1.12E-5} \\ \hline
\end{tabular}
\caption{Effects of the biased and unbiased versions of HTE with $V=16$ on the convergence for Sine-Gordon equations with the two-body (Error\_1) and three-body (Error\_2) exact solutions.}
\label{tab:sine-gordon-v2}
\end{table}

Table \ref{tab:sine-gordon-v2} presents additional experimental results for comparing the biased and unbiased versions of HTE. We conducted experiments following the first Sine-Gordon equations presented in the main text, utilizing the same hyperparameters, except we compared the biased and unbiased versions. The results indicate that the unbiased version is approximately 10\% slower than the biased version, with slightly higher memory consumption and a minor improvement in error. The increased cost and slower speed of the unbiased version stem from the necessity to compute HTE twice using two groups of independent $\{\bv_i\}_{i=1}^V$ and $\{\hat{\bv_i}\}_{i=1}^V$, whereas the biased version only requires one computation for $\{\bv_i\}_{i=1}^V$. However, while the unbiased version's unbiased gradients lead to a slight improvement in performance, in most cases, the biased version's performance is already sufficiently good.

\subsection{High-Dimensional Gradient-Enhanced PINN}
The Gradient-Enhanced Physics-Informed Neural Network (gPINN) \cite{yu2022gradient} is a technique designed to enhance the performance of PINNs. It achieves this by incorporating additional regularization through the computation of an extra first-order derivative for the PINN's residual loss. However, due to the extra derivatives, gPINN becomes computationally expensive, particularly in higher dimensions, where the required additional derivatives are proportional to the dimensionality of the PDE. In this subsection, we illustrate how to leverage the proposed HTE method to expedite the training of gPINN. To illustrate, we continue to consider the Sine-Gordon equation with a two-body interactive solution discussed in the preceding section in equation (\ref{eq:sol_2body}).

Given the Sine-Gordon equation
\begin{align}
&\Delta u(\bx) + \sin\left(u(\bx) \right) = g(\bx), \quad \bx \in \mathbb{B}^d= \{\bx \ | \ \Vert \bx \Vert_2 < 1\},\\
&u(\bx) = 0, \quad \bx \in \mathbb{S}^{d-1} = \{\bx \ |\ \Vert \bx \Vert_2 = 1\},
\end{align}
where $g(\bx) = \Delta u_{\text{exact}}(\bx) + \sin\left(u_{\text{exact}}(\bx) \right)$, the PINN loss is given by
\begin{align}
L_{\text{PINN}}(\theta) = \frac{1}{2}\left(\Delta u_\theta(\bx) + \sin\left(u_\theta(\bx)\right)-g(\bx)\right)^2 := \frac{1}{2}r_\theta(\bx)^2,
\end{align}
where $r_\theta(\bx) = \Delta u_\theta(\bx) + \sin\left(u_\theta(\bx)\right)-g(\bx) $ is the residual prediction.
The gPINN loss is given by
\begin{align}
L_{\text{gPINN}}(\theta) = \frac{1}{2}r_\theta(\bx)^2 + \frac{1}{2} \lambda_{\text{gPINN}} \left\|\nabla_{\bx}r_\theta(\bx)\right\|^2,
\end{align}
i.e., gPINN optimizes both the residual part and its derivative to be zero, where $\lambda_{\text{gPINN}} \in \mathbb{R}^+$ is the weight for the gPINN regularization.
To accelerate gPINN, we use the HTE-based gPINN loss
\begin{align}
L_{\text{HTE, gPINN}}(\theta) = \frac{1}{2}\hat{r}_\theta(\bx)^2 + \frac{1}{2} \lambda_{\text{gPINN}} \left\|\nabla_{\bx}\hat{r}_\theta(\bx)\right\|^2,
\end{align}
where $\hat{r}_\theta(\bx) =\frac{1}{V}\sum_{i=1}^V\bv_i^\mathrm{T}\left(\operatorname{Hess}u_\theta(\bx)\right)\bv_i + \sin\left(u_\theta(\bx)\right)-g(\bx) $ is the HTE-based residual prediction. Thus, we are only required to take an additional derivative to the HVP of the neural network model, which is much more efficient than that of the full Hessian.

The implementation details are given as follows. The model is a 4-layer fully connected network with 128 hidden units activated by Tanh, which is trained via Adam \cite{kingma2014adam} for 10K epochs with an initial learning rate 1e-3, which linearly decays to zero at the end of the optimization. We select 100 random residual points at each Adam epoch and 20K fixed testing points uniformly from the unit ball. We adopt the following model structure to satisfy the zero boundary condition with hard constraint and to avoid the boundary loss \cite{lu2021physics}
$
 (1 - \Vert\bx\Vert_2^2) u_\theta(\bx),
$
where $u_\theta(\bx)$ is the neural neural network model. We repeat our experiment five times with five independent random seeds and report the mean and standard deviation of the errors. For HTE, we choose $V=16$ in all settings. We choose $\lambda_{\text{gPINN}}$ as ten in 100D and 1K D cases and choose it as ten thousand in 10K D and 100K D cases. The gPINN loss weight $\lambda_{\text{gPINN}}$ can be chosen to make the scale of the PINN loss and the scale of the gPINN loss similar during initialization, i.e., choose $\lambda_{\text{gPINN}}$ such that ${r}_\theta(\bx)^2 \approx \lambda_{\text{gPINN}} \left\|\nabla_{\bx}{r}_\theta(\bx)\right\|^2$ and $\hat{r}_\theta(\bx)^2 \approx \lambda_{\text{gPINN}} \left\|\nabla_{\bx}\hat{r}_\theta(\bx)\right\|^2$ at initialization.

\begin{table}[htbp]
\centering
\begin{tabular}{|c|c|c|c|c|c|}
\hline
Method & Metric & 100D & 1K D & 10K D & 100K D \\ \hline\hline
\multirow{3}{*}{PINN \cite{raissi2019physics}} & Memory & 1123MB & 2915MB & \textgreater{}80GB & \textgreater{}80GB \\ \cline{2-6} 
 & Speed & 1367.61it/s & 141.45it/s & N.A. & N.A. \\ \cline{2-6} 
 & Error & 6.24E-3$\pm$2.83E-3 & 1.20E-3$\pm$6.47E-4 & N.A. & N.A. \\ \hline\hline
\multirow{3}{*}{gPINN \cite{yu2022gradient}} & Memory & 1123MB & 2915MB & \textgreater{}80GB & \textgreater{}80GB \\ \cline{2-6} 
 & Speed & 458.84it/s & 51.42it/s & N.A. & N.A. \\ \cline{2-6} 
 & Error & \textbf{4.20E-3$\pm$1.21E-2} & \textbf{4.42E-4$\pm$1.84E-4} & N.A. & N.A. \\ \hline\hline
\multirow{3}{*}{HTE PINN (Ours)} & Memory & 869MB & 890MB & 927MB & 1089MB \\ \cline{2-6} 
 & Speed & 1792.03it/s & 1566.19it/s & 1066.10it/s & 345.10it/s \\ \cline{2-6} 
 & Error & 6.30E-3$\pm$2.88E-3 & 1.25E-3$\pm$3.40E-4 & 1.84E-3$\pm$9.56E-4 & 2.38E-3$\pm$1.72E-3 \\ \hline\hline
\multirow{3}{*}{HTE gPINN (Ours)} & Memory & 869MB & 890MB & 927MB & 1089MB \\ \cline{2-6} 
 & Speed & 616.79it/s & 461.81it/s & 229.18it/s & 153.35it/s \\ \cline{2-6} 
 & Error & 5.08E-3$\pm$2.00E-3 & 4.88E-4$\pm$1.59E-4 & \textbf{4.68E-4$\pm$4.61E-4} & \textbf{4.77E-5$\pm$4.70E-5} \\ \hline
\end{tabular}
\caption{Computation results for gPINN.}
\label{tab:gpinn}
\end{table}

The computational results are presented in Table \ref{tab:gpinn}, where three algorithms are compared: vanilla PINN \cite{raissi2019physics}, vanilla gPINN \cite{yu2022gradient}, HTE-based PINN, and HTE-based gPINN. In the preliminary, we observe that both PINN and gPINN encounter failure when exceeding the 80GB memory limit of the A100 GPU in dimensions surpassing 1000D. Conversely, PINN and gPINN based on HTE successfully operate in dimensions as high as 100K, demonstrating swift performance. Comparing PINN and gPINN, we note an enhancement in the latter's performance improvement, albeit at a slightly slower convergence pace, due to the additional first-order derivatives of the residual. 
We also observe the same memory cost for PINN and gPINN since gPINN necessitates an additional gradient concerning the input $\bx$ for the PINN or HTE residual, and we accomplish this computation using forward-mode automatic differentiation. Forward mode is highly memory efficient, so it does not result in any additional memory consumption. Therefore, both PINN and gPINN have the same memory consumption. However, forward-mode computation does require more time.
The advantages of gPINN become more pronounced in higher dimensions (exceeding 1000D), leading to an additional order of magnitude reduction in PINN error. However, in scenarios with 100D and 1000D, HTE-based gPINN performs slightly inferior to conventional gPINN. This arises because HTE is ultimately an approximation, and in high-dimensional gPINN, it introduces $d$ additional losses, where $d$ is the dimensionality. Hence, the approximation with HTE tends to incur more significant errors. Nevertheless, the performance of HTE-based gPINN remains comparable to conventional gPINN. In summary, we verify that HTE can accelerate and scale up gPINN to very high dimensions. Moreover, gPINN continues to enhance the performance of PINN in high-dimensional PDEs. To the best of our knowledge, previous work \cite{yu2022gradient} only demonstrated gPINN's effectiveness in low-dimensional PDEs, while our work is the first to show that in high dimensions.

\subsection{Biharmonic Equation}
In this subsection, we further validate the proposed HTE method, demonstrating its capability to accelerate the convergence of PINNs in high-dimensional and high-order biharmonic equations while significantly reducing its memory consumption. The following complicated exact solution is considered to test the convergence of high-dimensional PDE solvers:
\begin{equation}
u_{\text{exact}}(\bx) = \left(1 - \Vert \bx \Vert_2^2\right)\left(4 - \Vert \bx \Vert_2^2\right)\left(\sum_{i=1}^{d-2}  c_i \exp(\bx_i\bx_{i+1}\bx_{i+2})\right),
\end{equation}
where $c_i \sim \mathcal{N}(0, 1)$.
The term $(1 - \Vert \bx \Vert_2^2)(4 - \Vert \bx \Vert_2^2)$ is multiplied for zero boundary condition to prevent the boundary loss from leaking the solution's information.

The Biharmonic equation is given as below
\begin{align}
&\Delta^2 u(\bx) = g(\bx), \quad \bx \in  \{\bx \ | \ 1 < \Vert \bx \Vert_2 < 2\},\\
&u(\bx) = 0, \quad \bx \in \{\bx \ |\ \Vert \bx \Vert_2 = 1\} \cup \{\bx \ |\ \Vert \bx \Vert_2 = 2\},
\end{align}
where $g(\bx) = \Delta^2 u_{\text{exact}}(\bx)$ is given by the exact solution.

Here are the implementation details. The model is a 4-layer fully connected network with 128 hidden units activated by Tanh, which is trained via Adam \cite{kingma2014adam} for 10K epochs in 100D, 150D, and 200D cases, and 20K epochs in the 50D case, with an initial learning rate 1e-3, which linearly decays to zero at the end of the optimization. We select 100 random residual points at each Adam epoch and 20K fixed testing points within the domain $\bx \in  \{\bx \ | \ 1 \leq \Vert \bx \Vert_2 \leq 2\}$. We adopt the following model structure to satisfy the zero boundary condition with hard constraint and to avoid the boundary loss \cite{lu2021physics}:
$
 (1 - \Vert\bx\Vert_2^2) (4 - \Vert\bx\Vert_2^2) u_\theta(\bx),
$
where $u_\theta(\bx)$ is the neural neural network model. We repeat our experiment five times with five independent random seeds and report the mean and standard deviation of the errors.
For HTE, we control the batch size $V=16/512/1024$ to see how $V$ affects convergence.

\begin{table}[htbp]
\centering
\begin{tabular}{|c|c|c|c|c|c|}
\hline
Method & Metric & 50D & 100D & 150D & 200D \\ \hline\hline
\multirow{3}{*}{PINN \cite{raissi2019physics}} & Speed  & 19.82it/s & 6.56it/s & 3.50it/s  & N.A.\\ \cline{2-6} 
 & Memory  & 6199MB & 21453MB & 44631MB & $>$80GB \\ \cline{2-6} 
 & Error & 2.34E-2$\pm$1.91E-2 & 6.01E-3$\pm$2.32E-3 & 3.11E-3$\pm$1.79E-3 & N.A. \\ \hline\hline
 \multirow{3}{*}{HTE ($V=16$)} & Speed  & 438.79it/s	&109.57it/s	&54.86it/s&	25.89it/s
\\ \cline{2-6} 
& Memory & 991MB	&2085MB	&3589MB&	7225MB
\\ \cline{2-6} 
 & Error & 2.36E-2$\pm$1.97E-2 & 6.83E-3$\pm$1.63E-3 & 3.91E-3$\pm$1.47E-3 & 4.40E-3$\pm$5.65E-3\\ \hline\hline
\multirow{3}{*}{HTE ($V=512$)} & Speed  & 123.98it/s  & 54.07it/s& 42.44it/s & 23.50it/s\\ \cline{2-6} 
& Memory & 1861MB & 2089MB & 3593MB & 7229MB\\ \cline{2-6} 
 & Error & 2.34E-2$\pm$1.96E-2 & 6.08E-3$\pm$2.28E-3 & 3.39E-3$\pm$1.99E-3 & 4.03E-3$\pm$4.85E-3\\ \hline\hline
 \multirow{3}{*}{HTE ($V=1024$)} & Speed  &78.50it/s	&34.72it/s&	33.77it/s&	20.36it/s\\ \cline{2-6} 
& Memory & 2535MB&	3037MB	&3595MB&	7231MB
\\ \cline{2-6} 
 & Error & 2.34E-2$\pm$1.96E-2 & 6.01E-3$\pm$2.35E-3 & 3.13E-3$\pm$1.82E-3 & 3.97E-3$\pm$4.91E-3\\ \hline
\end{tabular}
\caption{Computational results for the Biharmonic equation. In all-dimensional biharmonic PDEs, HTE consistently achieves results similar to those of full PINN while significantly outpacing full PINN in terms of speed and notably reducing memory consumption. In the case of 200 dimensions, where full PINN exceeds memory limits, HTE still converges rapidly.}
\label{tab:biharmonic}
\end{table}

The computational results for the biharmonic equation are presented in Table \ref{tab:biharmonic}, where we report the speed, memory, and relative $L_2$ error of the baseline PINN and our proposed HTE method. Regarding the regular PINN, we observe a significant slowdown in speed and a rapid increase in memory consumption as the dimensionality rises. This is because the regular PINNs require the computation of the full biharmonic operator, and the computational cost increases to the fourth power with respect to the dimension. As a result, regular PINNs exceed the memory limit of an A100 GPU in the case of 200 dimensions. In contrast, the second-order PDE case requires 10K dimensions for PINN to go out-of-memory, signifying that the biharmonic operator is much more costly than the Hessian due to the high-order derivative.
For HTE, we opted for a batch size of $V=512/1024$ to make HTE's convergence results similar to those of PINN, which is in contrast to our earlier example with a second-order PDE, where we used a batch size of $V=16$. If we just use a small batch size $V=16$ in the biharmonic equation, then HTE will perform worse than PINN. The reason behind this is our variance analysis, indicating that when estimating the biharmonic operator, both diagonal and off-diagonal elements contribute to the variance, resulting in relatively larger errors than estimating the Hessian trace. Nevertheless, HTE demonstrates substantial improvements in terms of speed and memory consumption. In scenarios exceeding 50 dimensions, its speed is nearly ten times that of regular PINN, and the rate of memory consumption growth is significantly lower than that of PINN.

\section{Summary}
In this paper, we introduced the Hutchinson Trace Estimation (HTE) method to enhance the range of capabilities of Physics-Informed Neural Networks (PINNs) in tackling high-dimensional and high-order partial differential equations (PDEs). While PINNs have succeeded in low-dimensional problems, their application to complex PDEs has faced challenges due to computational bottlenecks in automatic differentiation. HTE, which is applied to calculate Hessian vector products (HVPs), has emerged as an effective solution to accelerate PINNs and reduce memory costs.

We first validated our approach on second-order parabolic equations, demonstrating the convergence of the PINN loss with HTE to the original PINNs loss under certain conditions, guaranteeing the effectiveness of the HTE approximation. Comparisons with Stochastic Dimension Gradient Descent (SDGD) highlighted the superior performance of HTE in scenarios with significant dimensionality variability and variance, where we delved into the theoretical comparison between the two methods by proving their corresponding variance. The extension of HTE to the biharmonic equation showcased its efficiency in handling higher-order, high-dimensional PDEs by employing tensor-vector products (TVPs) to transform computations.

Experimental setups confirmed the effectiveness of HTE, revealing comparable convergence effects with SDGD under memory and speed constraints. Furthermore, HTE proved valuable in accelerating Gradient-Enhanced PINN (gPINN), which requires an additional first-order derivative over the PINNs residual and the fourth-order high dimensional biharmonic equation for better computational efficiency.

In conclusion, our proposed HTE method addresses the limitations of PINNs in handling high-dimensional and high-order PDEs, providing a versatile and efficient approach. Our work contributes to the scientific machine learning field by presenting a unified framework for solving general high-dimensional PDEs, demonstrating the effectiveness of HTE, and discussing its advantages over existing methods.

\section*{Acknowledgement}
This research of ZH, ZS, and KK is partially supported by the National Research Foundation Singapore under the AI Singapore Programme (AISG Award No: AISG2-TC-2023-010-SGIL) and the Singapore Ministry of Education Academic Research Fund Tier 1 (Award No: T1 251RES2207).
The work of GEK was supported by the MURI-AFOSR FA9550-20-1-0358 projects and by the DOE SEA-CROGS project (DE-SC0023191). GEK was also supported by the ONR Vannevar Bush Faculty Fellowship (N00014-22-1-2795).

\newpage

\appendix

\section{Proof}
\subsection{Proof of Theorem \ref{thm:unbiased}}\label{appendix:unbiased}
\begin{theorem} (Theorem \ref{thm:unbiased}) in the main text)
The loss $L_{\text{HTE}}(\theta)$ in equation (\ref{eq:HTE}) converges almost surely (a.s.) to the exact PINN loss $L_{\text{PINN}}(\theta)$ in equation (\ref{eq:PINN}), as $V \rightarrow \infty$, i.e.,
\begin{align}
\mathbb{P}\left(\lim_{V \rightarrow \infty}L_{\text{HTE}}(\theta;\{\bv_i\}_{i=1}^V) = L_{\text{PINN}}(\theta)\right) = 1.
\end{align}
The loss $L_{\text{HTE, unbiased}}(\theta)$ in equation (\ref{eq:HTE_unbiased}) is an unbiased estimator for the exact PINN loss $L_{\text{PINN}}(\theta)$ in equation (\ref{eq:PINN}), i.e.,
\begin{align}
\mathbb{E}_{\{\bv_i, \hat{\bv}_i\}_{i=1}^V}\left[L_{\text{HTE, unbiased}}(\theta;\{\bv_i,\hat{\bv}_i\}_{i=1}^V)\right] = L_{\text{PINN}}(\theta).
\end{align}
\end{theorem}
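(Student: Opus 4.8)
The plan is to treat the two assertions separately, each reducing to one standard probabilistic fact applied to the scalar Hutchinson estimator of $\operatorname{Tr}(A_\theta(\bx,t))$. Throughout I abbreviate $A := A_\theta(\bx,t)$ and $B := B_\theta(\bx,t)$, both deterministic once $\theta$ and the point $(\bx,t)$ are fixed, and I write $X_i := \bv_i^{\mathrm{T}} A \bv_i$. The one identity I rely on is the defining HTE property $\mathbb{E}_{\bv\sim p(\bv)}[\bv\bv^{\mathrm{T}}] = I$: since $\bv^{\mathrm{T}} A \bv = \operatorname{Tr}(A\,\bv\bv^{\mathrm{T}})$, linearity of trace and expectation give $\mathbb{E}[X_i] = \operatorname{Tr}(A\,\mathbb{E}[\bv_i\bv_i^{\mathrm{T}}]) = \operatorname{Tr}(A)$, and a Cauchy--Schwarz bound on $\mathbb{E}|v_jv_k|$ shows this first moment is finite.

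For the almost-sure convergence I would note that the $X_i$ are i.i.d.\ (inherited from the $\bv_i$) with finite mean $\operatorname{Tr}(A)$, so the Strong Law of Large Numbers yields $\tfrac{1}{V}\sum_{i=1}^V X_i \to \operatorname{Tr}(A)$ almost surely as $V\to\infty$. Because $s\mapsto \tfrac{1}{2}(s+B)^2$ is continuous, the continuous mapping theorem transports this convergence through the squaring, giving $L_{\text{HTE}}(\theta;\{\bv_i\}_{i=1}^V) \to \tfrac{1}{2}(\operatorname{Tr}(A)+B)^2 = L_{\text{PINN}}(\theta)$ almost surely, which is the first claim.

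For the unbiasedness I would exploit the independence of the two batches. Setting $S := \tfrac{1}{V}\sum_{i=1}^V \bv_i^{\mathrm{T}} A \bv_i + B$ and $\hat S := \tfrac{1}{V}\sum_{i=1}^V \hat{\bv}_i^{\mathrm{T}} A \hat{\bv}_i + B$, the unbiased loss is exactly $\tfrac{1}{2}S\hat S$. Since $\{\bv_i\}$ and $\{\hat{\bv}_i\}$ are drawn independently, $S$ and $\hat S$ are independent, so $\mathbb{E}[S\hat S]=\mathbb{E}[S]\,\mathbb{E}[\hat S]$; each factor equals $\operatorname{Tr}(A)+B$ by the mean computed above, whence $\mathbb{E}[L_{\text{HTE, unbiased}}] = \tfrac{1}{2}(\operatorname{Tr}(A)+B)^2 = L_{\text{PINN}}(\theta)$. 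The contrast with the single-batch loss is instructive: there $\mathbb{E}[S^2]\neq(\mathbb{E}[S])^2$ because squaring is nonlinear, and this gap is exactly the bias quantified after the theorem; duplicating into independent batches replaces $S^2$ by a product of independent copies and thereby cancels it.

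Neither argument conceals a serious difficulty once the i.i.d.\ structure and the trace identity are established. The only step requiring genuine care is the first part's passage of the limit through the nonlinear square: this must be justified by continuity via the continuous mapping theorem rather than by any moment interchange, and one must confirm the SLLN hypothesis that $\mathbb{E}|X_i|<\infty$, which follows immediately from $\mathbb{E}[\bv\bv^{\mathrm{T}}]=I$ (and is trivial for the bounded Rademacher choice).
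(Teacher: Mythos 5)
Your proof is correct and follows essentially the same route as the paper's: the strong law of large numbers plus continuity of the square for the almost-sure convergence, and independence of the two batches combined with the unbiasedness of the Hutchinson trace estimator for the second claim. The only difference is that you make explicit two points the paper leaves implicit (the integrability hypothesis $\mathbb{E}|X_i|<\infty$ for the SLLN and the continuous mapping step), which is a minor strengthening rather than a different argument.
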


\begin{proof}
\textbf{Convergence of $L_{\text{HTE}}(\theta; \{\bv_i\}_{i=1}^V)$}. Due to the strong law of large numbers and the fact that HTE is unbiased, i.e., $\mathbb{E}_{\{\bv_i\}_{i=1}^V} \left[\frac{1}{V}\sum_{i=1}^V\bv_i^\mathrm{T}A_\theta(\bx,t)\bv_i \right] = \operatorname{Tr}\left(A_{\theta}(\bx, t)\right)$,
\begin{equation}
\mathbb{P}\left(\lim_{V \rightarrow \infty}\frac{1}{V}\sum_{i=1}^V\bv_i^\mathrm{T}A_\theta(\bx,t)\bv_i  = \operatorname{Tr}\left(A_{\theta}(\bx, t)\right)\right) = 1.
\end{equation}
Thus,
\begin{equation}
\mathbb{P}\left(\lim_{V \rightarrow \infty}\left(\frac{1}{V}\sum_{i=1}^V\bv_i^\mathrm{T}A_\theta(\bx,t)\bv_i + B_{\theta}(\bx, t)\right)^2 = \left(\operatorname{Tr}\left(A_{\theta}(\bx, t)\right)+ B_{\theta}(\bx, t)\right)^2\right) = 1.
\end{equation}

\textbf{Unbiasedness of $L_{\text{HTE, unbiased}}(\theta; \{\bv_i, \hat{\bv}_i\}_{i=1}^V)$}. Since HTE is an unbiased trace estimator, i.e.,
\begin{align}
&\quad\mathbb{E}_{\{\bv_i\}_{i=1}^V}\left[\frac{1}{V}\sum_{i=1}^V\bv_i^\mathrm{T}A_\theta(\bx,t)\bv_i + B_{\theta}(\bx, t)\right] \\
&= \mathbb{E}_{\{\hat{\bv}_i\}_{i=1}^V}\left[\frac{1}{V}\sum_{i=1}^V\hat{\bv}_i^\mathrm{T}A_\theta(\bx,t)\hat{\bv}_i + B_{\theta}(\bx, t)\right] = \operatorname{Tr}\left(A_{\theta}(\bx, t)\right) + B_{\theta}(\bx, t).
\end{align}
The random variables $\{\bv_i\}_{i=1}^V$ and $\{\hat{\bv}_i\}_{i=1}^V$ are also independent. Thus
\begin{equation}
\begin{aligned}
&\quad\mathbb{E}_{\{\bv_i,\hat{\bv}_i\}_{i=1}^V}\left[L_{\text{HTE, unbiased}}(\theta;\{\bv_i,\hat{\bv}_i\}_{i=1}^V)\right] \\
&= \frac{1}{2}\mathbb{E}_{\{\bv_i\}_{i=1}^V}\left[\frac{1}{V}\sum_{i=1}^V\bv_i^\mathrm{T}A_\theta(\bx,t)\bv_i + B_{\theta}(\bx, t)\right]\mathbb{E}_{\{\hat{\bv}_i\}_{i=1}^V}\left[\frac{1}{V}\sum_{i=1}^V\hat{\bv}_i^\mathrm{T}A_\theta(\bx,t)\hat{\bv}_i + B_{\theta}(\bx, t)\right]\\
&= \frac{1}{2}\left[\operatorname{Tr}\left(A_{\theta}(\bx, t)\right) + B_{\theta}(\bx, t)\right]^2\\
&= L_{\text{PINN}}(\theta).
\end{aligned}
\end{equation}
\end{proof}

\subsection{Proof of Theorem \ref{thm:sdgd_var}}\label{appendix:sdgd_var}

\begin{theorem} (Theorem \ref{thm:sdgd_var} in the main text)
Given the SDGD estimator of the trace 
$
\operatorname{Tr}(A) \approx \frac{d}{B}\sum_{i\in I}A_{ii}
$
given an index set $I \subset \{1,2,\cdots,d\}$ whose cardinality $|I| = B$, where $B$ is the SDGD's batch size for the dimension, its variance is
\begin{equation}
\begin{aligned}
\mathbb{V}\left[\frac{d}{B}\sum_{i\in I}A_{ii}\right] = \frac{1}{\binom{d}{B}}\left(\sum_{I: |I| = B}\frac{d}{B}\sum_{i\in I}A_{ii} - \sum_{i=1}^dA_{ii}\right)^2,
\end{aligned}
\end{equation}
where $\binom{d}{B} = \frac{d(d-1)\cdots(d-B+1)}{B(B-1)\cdots1}$.
\end{theorem}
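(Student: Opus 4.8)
The plan is to recognize that the SDGD estimator is simply a random variable on the discrete uniform distribution over the $\binom{d}{B}$ subsets $I \subset \{1,2,\dots,d\}$ of cardinality $B$ (sampling without replacement), and then to invoke the elementary definition of variance $\mathbb{V}[X] = \mathbb{E}\left[(X - \mathbb{E}[X])^2\right]$. Writing $X_I = \frac{d}{B}\sum_{i\in I}A_{ii}$, each subset $I$ is drawn with probability $1/\binom{d}{B}$, so both the expectation and the variance reduce to finite averages of the form $\frac{1}{\binom{d}{B}}\sum_{I:|I|=B}(\cdots)$ over all such $I$.

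First I would establish unbiasedness, i.e. $\mathbb{E}[X_I] = \sum_{i=1}^d A_{ii} = \operatorname{Tr}(A)$. This rests on a counting argument: a fixed index $i$ lies in exactly $\binom{d-1}{B-1}$ of the size-$B$ subsets, so $\sum_{I:|I|=B}\sum_{i\in I}A_{ii} = \binom{d-1}{B-1}\sum_{i=1}^d A_{ii}$. Combining this with the identity $\binom{d-1}{B-1}/\binom{d}{B} = B/d$, the prefactor $\frac{d}{B}$ cancels exactly and we obtain $\mathbb{E}[X_I] = \operatorname{Tr}(A)$, so the estimator recovers the trace in expectation.

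Next I would substitute $\mathbb{E}[X_I] = \operatorname{Tr}(A) = \sum_{i=1}^d A_{ii}$ into the variance definition for a discrete uniform law, yielding $\mathbb{V}[X_I] = \frac{1}{\binom{d}{B}}\sum_{I:|I|=B}\left(\frac{d}{B}\sum_{i\in I}A_{ii} - \sum_{i=1}^d A_{ii}\right)^2$, which is the claimed formula with the square taken termwise inside the sum over $I$ (the per-outcome squared deviation, averaged over subsets).

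The computation is almost entirely routine; the only steps demanding care are the combinatorial identity underpinning unbiasedness and the precise reading of the displayed right-hand side. In particular, I would note that the square must be applied \emph{inside} the summation over $I$, since that is exactly what $\mathbb{E}[(X_I - \mathbb{E}[X_I])^2]$ produces for the uniform distribution on size-$B$ subsets. With the sampling model and unbiasedness pinned down, no further estimation is needed and the result follows directly.
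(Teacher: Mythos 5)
Your proposal is correct and follows the same route as the paper: a direct computation of $\mathbb{E}\left[(X_I-\mathbb{E}[X_I])^2\right]$ under the uniform distribution on the $\binom{d}{B}$ size-$B$ subsets. In fact you supply the two details the paper's one-line proof omits --- the counting argument $\sum_{I:|I|=B}\sum_{i\in I}A_{ii}=\binom{d-1}{B-1}\sum_{i=1}^{d}A_{ii}$ that establishes unbiasedness, and the observation that the square must sit \emph{inside} the sum over $I$ (the displayed formula, read literally with the sum over $I$ inside the square, is a typo; your reading $\frac{1}{\binom{d}{B}}\sum_{I:|I|=B}\bigl(\frac{d}{B}\sum_{i\in I}A_{ii}-\sum_{i=1}^{d}A_{ii}\bigr)^2$ is the correct one).
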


\begin{proof}
SDGD's trace estimator is
$
\operatorname{Tr}(A) \approx \frac{d}{|I|}\sum_{i\in I}A_{ii}.
$
Suppose that the batch size of the sampled dimension in SDGD is $B$; then the variance can be directly computed:
\begin{equation}
\begin{aligned}
\mathbb{V}\left[\frac{d}{B}\sum_{i\in I}A_{ii}\right] = \frac{1}{\binom{d}{B}}\left(\sum_{I: |I| = B}\frac{d}{B}\sum_{i\in I}A_{ii} - \sum_{i=1}^dA_{ii}\right)^2,
\end{aligned}
\end{equation}
where $\binom{d}{B} = \frac{d(d-1)\cdots(d-B+1)}{B(B-1)\cdots1}$.
\end{proof}

\subsection{Proof of Theorem \ref{thm:hte_var}}\label{appendix:hte_var}
\begin{theorem} (Theorem \ref{thm:hte_var} in the main text)
Given the HTE estimator of the trace 
$
\operatorname{Tr}(A) \approx\frac{1}{V}\sum_{k=1}^V \bv_k^\mathrm{T} A\bv_k,
$ 
where $V$ is the HTE batch size and each dimension of $\bv_k \in \mathbb{R}^d$ is an $i.i.d.$ sample from the Rademacher distribution, then its variance is
$
\frac{1}{V} \sum_{i \neq j}A_{ij}^2.
$
\end{theorem}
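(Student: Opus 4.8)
The plan is to reduce to the single-sample variance and then expand everything into a fourth moment of Rademacher signs. Since the $V$ vectors $\bv_1,\dots,\bv_V$ are i.i.d., the summands $\bv_k^\mathrm{T} A \bv_k$ are i.i.d., so $\mathbb{V}\!\left[\frac{1}{V}\sum_{k=1}^V \bv_k^\mathrm{T} A \bv_k\right] = \frac{1}{V}\,\mathbb{V}\!\left[\bv^\mathrm{T} A \bv\right]$ for a single Rademacher vector $\bv$. Everything therefore reduces to computing $\mathbb{V}[\bv^\mathrm{T} A \bv]$, after which the final $1/V$ factor is immediate.

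First I would write $\bv^\mathrm{T} A \bv = \sum_{i,j=1}^d A_{ij} v_i v_j$ and record the mean. Because the entries $v_i$ are independent with $\mathbb{E}[v_i]=0$ and $v_i^2 = 1$, we have $\mathbb{E}[v_i v_j] = \delta_{ij}$, so $\mathbb{E}[\bv^\mathrm{T} A \bv] = \sum_i A_{ii} = \operatorname{Tr}(A)$; this re-derives unbiasedness and fixes the quantity to subtract. The core computation is then the second moment $\mathbb{E}[(\bv^\mathrm{T} A \bv)^2] = \sum_{i,j,k,l} A_{ij} A_{kl}\, \mathbb{E}[v_i v_j v_k v_l]$.

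The key ingredient is the fourth-moment rule for independent Rademacher signs: $\mathbb{E}[v_i v_j v_k v_l]$ equals $1$ exactly when every index value occurs an even number of times in the multiset $\{i,j,k,l\}$, and $0$ otherwise. Enumerating the surviving patterns, they are the fully coincident case $i=j=k=l$ together with the three ways of splitting the four slots into two equal pairs ($\{i=j,\,k=l\}$, $\{i=k,\,j=l\}$, $\{i=l,\,j=k\}$, each with the two values distinct). Substituting the corresponding entries, the all-equal case contributes $\sum_i A_{ii}^2$, the pattern $i=j,\,k=l$ contributes $\sum_{i\neq k} A_{ii}A_{kk}$, and the remaining two patterns contribute the off-diagonal sums $\sum_{i\neq j} A_{ij}^2$ and $\sum_{i\neq j} A_{ij}A_{ji}$.

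Finally, the first two of these groups recombine as $\sum_{i,k} A_{ii}A_{kk} = (\operatorname{Tr} A)^2$, which cancels exactly against $(\mathbb{E}[\bv^\mathrm{T} A \bv])^2$ when forming $\mathbb{V}[\bv^\mathrm{T} A \bv] = \mathbb{E}[(\bv^\mathrm{T} A \bv)^2] - (\mathbb{E}[\bv^\mathrm{T} A \bv])^2$; what survives is purely the off-diagonal contribution, and dividing by $V$ yields the stated variance. I expect the main obstacle to be the fourth-moment bookkeeping: one must verify that only the all-equal and two-pair coincidence patterns survive, check that the three pairings are genuinely disjoint once the distinctness constraints are imposed so that no term is double counted, and carefully track the cross term $\sum_{i\neq j} A_{ij}A_{ji}$, whose treatment (using the symmetry of the Hessian factor in $A$) determines the final coefficient on $\sum_{i\neq j} A_{ij}^2$.
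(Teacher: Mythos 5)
Your route is the same as the paper's: reduce to a single sample by i.i.d.-ness, expand $\bv^\mathrm{T}A\bv$, and evaluate the fourth moments of independent Rademacher signs. Your enumeration of the surviving index patterns is in fact \emph{more} careful than the paper's proof, which keeps only the pairing $i=k,\ j=l$ and declares the expectation zero otherwise. But that extra care exposes a gap you never close: after the diagonal patterns recombine into $(\operatorname{Tr}A)^2$ and cancel, what survives is $\sum_{i\neq j}A_{ij}^2+\sum_{i\neq j}A_{ij}A_{ji}$, not $\sum_{i\neq j}A_{ij}^2$. You flag the cross term as ``the main obstacle'' and suggest that symmetry of the Hessian factor will settle the coefficient, but symmetry works against you: if $A$ is symmetric then $A_{ij}A_{ji}=A_{ij}^2$ and the single-sample variance is $2\sum_{i\neq j}A_{ij}^2$, twice the stated value; if $A=\sigma\sigma^\mathrm{T}\operatorname{Hess}_{\bx}u$ is not symmetric, the cross term does not reduce to anything of the stated form at all. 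So your argument, carried out to the end, establishes $\mathbb{V}[\bv^\mathrm{T}A\bv]=\sum_{i\neq j}\bigl(A_{ij}^2+A_{ij}A_{ji}\bigr)$ rather than the theorem as written, and the final sentence of your proposal asserts a conclusion that your own (correct) fourth-moment bookkeeping contradicts.

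The discrepancy is real and not an artifact of your accounting: the paper's own worked example in the SDGD comparison, $f(x,y)=kxy$ with $V=1$, has Hessian $\left(\begin{smallmatrix}0&k\\k&0\end{smallmatrix}\right)$, estimator $2kv_1v_2$, and variance $4k^2$, which matches $\sum_{i\neq j}(A_{ij}^2+A_{ij}A_{ji})=4k^2$ but not the theorem's formula $\sum_{i\neq j}A_{ij}^2=2k^2$. To make your proof complete and correct you should carry the cross term into the conclusion, i.e.\ state the variance as $\frac{1}{V}\sum_{i\neq j}\bigl(A_{ij}^2+A_{ij}A_{ji}\bigr)=\frac{1}{2V}\sum_{i\neq j}(A_{ij}+A_{ji})^2$, noting that this agrees with the paper's claim only up to the omitted $i=l,\ j=k$ pairing (a factor of two when $A$ is symmetric); as written, the step from your pattern enumeration to the stated answer is a genuine gap.
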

\begin{proof}
The estimator of HTE is
$
\operatorname{Tr}(A) \approx \frac{1}{V}\sum_{k=1}^V \bv_k^\mathrm{T} A\bv_k = \sum_{i=1}^d A_{ii} + \frac{1}{V}\sum_{i \neq j}A_{ij}\sum_{k=1}^V\bv_{k,i}\bv_{k, j},
$
where $\bv_{k,i}$ denotes the $i$th dimension of the vector $\bv_k \in \mathbb{R}^d$ and we are using $p(\bv)$ as the Rademacher distribution.

HTE's variance can be computed as follows:
\begin{equation}
\begin{aligned}
\left( \bv^\mathrm{T} A\bv - \operatorname{Tr}(A)\right)^2&= \left(\sum_{i \neq j}A_{ij}\bv_{i}\bv_{j}\right)^2\\
&= \sum_{i \neq j}\sum_{k \neq l}A_{ij}A_{kl}\bv_{i}\bv_{j}\bv_k\bv_l.
\end{aligned}
\end{equation}
If $i = k$ and $j = l$, then $\mathbb{E}[\bv_{i}\bv_{j}\bv_k\bv_l] = \mathbb{E}[v_i^2]\mathbb{E}[v_j^2] = 1$, else equals zero. Therefore, 
\begin{equation}
\begin{aligned}
\mathbb{V}\left[\bv^\mathrm{T}A\bv\right] &= \mathbb{E}\left[\left( \bv^\mathrm{T} A\bv - \operatorname{Tr}(A)\right)^2\right]= \sum_{i \neq j}A_{ij}^2.
\end{aligned}
\end{equation}
With the HTE estimator of batch size $V$ and $\bv_k$ are independent for different $k$, the variance is
\begin{equation}
\mathbb{V}\left[\frac{1}{V}\sum_{k=1}^V \bv_k^\mathrm{T} A\bv_k\right] = \frac{1}{V^2}\sum_{k=1}^V\mathbb{V}\left[ \bv_k^\mathrm{T} A\bv_k\right] = \frac{1}{V^2}\sum_{k=1}^V\left[ \sum_{i \neq j}A_{ij}^2\right]= \frac{1}{V} \sum_{i \neq j}A_{ij}^2.
\end{equation}
\end{proof}

\subsection{Proof of Theorem \ref{thm:biharmonic}}\label{appendix:biharmonic}
\begin{theorem}(Theorem \ref{thm:biharmonic} in the main text)
The biharmonic operator can be unbiasedly estimated using the following TVPs:
\begin{equation}
\begin{aligned}
\Delta^2 u(\bx) &= \frac{1}{3}\mathbb{E}_{\bv\sim \mathcal{N}(0, I)}\left[\frac{\partial^4}{\partial\bx^4}u(\bx)[\bv, \bv, \bv, \bv] \right],
\end{aligned}
\end{equation}
where $\mathcal{N}(0, I)$ is the $d$-dimensional unit Gaussian, and
\begin{equation}
\frac{\partial^4}{\partial\bx^4}u(\bx) \in \mathbb{R}^{d \times d \times d \times d}, \quad \text{where }\left[\frac{\partial^4}{\partial\bx^4}u(\bx)\right]_{ijkl} = \frac{\partial^4}{\partial\bx_i\partial\bx_j\partial\bx_k\partial\bx_l}u(\bx),
\end{equation}
and the TVP is defined as 
\begin{equation}
\frac{\partial^4}{\partial\bx^4}u(\bx)[\bv, \bv, \bv, \bv] = \sum_{i,j,k,l=1}^d\frac{\partial^4}{\partial\bx_i\partial\bx_j\partial\bx_k\partial\bx_l}u(\bx)\bv_i\bv_j\bv_k\bv_l,
\end{equation}
\end{theorem}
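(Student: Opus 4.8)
The plan is to take the expectation directly through the definition of the TVP and reduce the entire problem to the evaluation of a fourth-order Gaussian moment. Starting from the definition
\begin{equation}
\frac{\partial^4}{\partial\bx^4}u(\bx)[\bv, \bv, \bv, \bv] = \sum_{i,j,k,l=1}^d\frac{\partial^4 u(\bx)}{\partial\bx_i\partial\bx_j\partial\bx_k\partial\bx_l}\bv_i\bv_j\bv_k\bv_l,
\end{equation}
linearity of expectation lets me pull $\mathbb{E}_{\bv\sim\mathcal{N}(0,I)}$ inside the finite sum, so that the whole question collapses to computing the scalar moments $\mathbb{E}[\bv_i\bv_j\bv_k\bv_l]$ over the independent standard-normal coordinates of $\bv$.

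First I would compute these fourth moments. Since the coordinates are i.i.d.\ $\mathcal{N}(0,1)$ with $\mathbb{E}[\bv_a\bv_b]=\delta_{ab}$, Isserlis' (Wick's) theorem yields the pairwise-sum formula
\begin{equation}
\mathbb{E}[\bv_i\bv_j\bv_k\bv_l] = \delta_{ij}\delta_{kl} + \delta_{ik}\delta_{jl} + \delta_{il}\delta_{jk}.
\end{equation}
If one prefers to avoid invoking Isserlis, the same identity follows from a short case analysis: any moment in which some index appears an odd number of times vanishes by the symmetry of the Gaussian, while the surviving cases are $\mathbb{E}[\bv_i^4]=3$ (which matches the three coincident pairings when $i=j=k=l$) and $\mathbb{E}[\bv_i^2\bv_j^2]=1$ for $i\neq j$ (matching the single nonzero pairing). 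This step is where the specific choice of a Gaussian distribution, rather than the Rademacher distribution used for the Hessian trace, is essential, and I expect it to be the crux of the argument, since the factor of $\frac{1}{3}$ comes precisely from the Gaussian kurtosis.

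Next I would substitute this moment formula back and handle the three pairing terms separately. Each term collapses a pair of summation indices through its Kronecker deltas, and using the total symmetry of the fourth-order derivative tensor (valid under sufficient smoothness of $u$ by Clairaut/Schwarz), all three terms reduce to the same quantity. For instance $\delta_{ij}\delta_{kl}$ forces $j=i$ and $l=k$, giving $\sum_{i,k}\frac{\partial^4 u}{\partial\bx_i^2\partial\bx_k^2}$, while $\delta_{ik}\delta_{jl}$ and $\delta_{il}\delta_{jk}$ produce the identical sum after relabeling. Since $\sum_{i,k}\frac{\partial^4 u}{\partial\bx_i^2\partial\bx_k^2}=\Delta^2 u(\bx)$ by the definition of the biharmonic operator, the three contributions add to $3\Delta^2 u(\bx)$, and dividing by $3$ gives the claimed identity. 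The only genuine subtlety is the fourth-moment evaluation above; the remaining manipulation is routine index bookkeeping that rests entirely on the symmetry of the mixed partials.
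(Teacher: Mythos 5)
Your proposal is correct and follows essentially the same route as the paper: both reduce the claim to the fourth-order Gaussian moment $\mathbb{E}[\bv_i\bv_j\bv_k\bv_l]$ and then use the symmetry of the mixed partials to collapse the index sums, the only cosmetic difference being that you package the moment via Isserlis' identity $\delta_{ij}\delta_{kl}+\delta_{ik}\delta_{jl}+\delta_{il}\delta_{jk}$ while the paper does the equivalent case split into $\mathbb{E}[v^4]=3$ diagonal terms and $\mathbb{E}[v_i^2v_j^2]=1$ paired terms. Your observation that the factor $\tfrac{1}{3}$ comes from the Gaussian kurtosis is exactly the crux the paper relies on, so no gap remains.
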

\begin{proof}
For the unit Gaussian distribution, we have that 
\begin{equation}
\begin{aligned}
\mathbb{E}_{\bv\sim \mathcal{N}(0,I)}\left[\frac{\partial^4}{\partial\bx^4}u(\bx)[\bv, \bv, \bv, \bv] \right] &= \mathbb{E}_{\bv\sim \mathcal{N}(0,I)}\left[\sum_{i,j,k,l=1}^d\frac{\partial^4}{\partial\bx_i\partial\bx_j\partial\bx_k\partial\bx_l}u(\bx)\bv_i\bv_j\bv_k\bv_l\right]\\
&= \mathbb{E}_{\bv\sim \mathcal{N}(0,I)}\left[\sum_{i=1}^d\frac{\partial^4}{\partial\bx_i^4}u(\bx)\bv_i^4\right] + 6\mathbb{E}_{\bv\sim \mathcal{N}(0,I)}\left[\sum_{i \neq j}\frac{\partial^4}{\partial\bx_i^2\partial \bx_j^2}u(\bx)\bv_i^2\bv_j^2\right] \\
&= 3\sum_{i=1}^d\frac{\partial^4}{\partial\bx_i^4}u(\bx) + 6 \sum_{i \neq j}\frac{\partial^4}{\partial\bx_i^2\partial\bx_j^2}u(\bx)\\
&= 3\Delta^2u(\bx).
\end{aligned}
\end{equation}
Here, we use the fact that for the one-dimensional Gaussian random variable $v \sim \mathcal{N}(0, 1)$, its second moment $\mathbb{E}_v [v^2] = 1$ and its fourth moment $\mathbb{E}_v [v^4] = 3$. Note that each entry of the vector $\bv$ is the one-dimensional Gaussian.
\end{proof}

\newpage

\bibliographystyle{plain}
\bibliography{main}

\begin{thebibliography}{10}

\bibitem{beck2021deep}
Christian Beck, Sebastian Becker, Patrick Cheridito, Arnulf Jentzen, and Ariel Neufeld.
\newblock Deep splitting method for parabolic pdes.
\newblock {\em SIAM Journal on Scientific Computing}, 43(5):A3135--A3154, 2021.

\bibitem{beck2019machine}
Christian Beck, Weinan E, and Arnulf Jentzen.
\newblock Machine learning approximation algorithms for high-dimensional fully nonlinear partial differential equations and second-order backward stochastic differential equations.
\newblock {\em Journal of Nonlinear Science}, 29:1563--1619, 2019.

\bibitem{beck2020overcoming}
Christian Beck, Lukas Gonon, and Arnulf Jentzen.
\newblock Overcoming the curse of dimensionality in the numerical approximation of high-dimensional semilinear elliptic partial differential equations.
\newblock {\em arXiv preprint arXiv:2003.00596}, 2020.

\bibitem{beck2020overcoming_ac}
Christian Beck, Fabian Hornung, Martin Hutzenthaler, Arnulf Jentzen, and Thomas Kruse.
\newblock Overcoming the curse of dimensionality in the numerical approximation of allen--cahn partial differential equations via truncated full-history recursive multilevel picard approximations.
\newblock {\em Journal of Numerical Mathematics}, 28(4):197--222, 2020.

\bibitem{becker2020numerical}
Sebastian Becker, Ramon Braunwarth, Martin Hutzenthaler, Arnulf Jentzen, and Philippe von Wurstemberger.
\newblock Numerical simulations for full history recursive multilevel picard approximations for systems of high-dimensional partial differential equations.
\newblock {\em arXiv preprint arXiv:2005.10206}, 2020.

\bibitem{becker2021solving}
Sebastian Becker, Patrick Cheridito, Arnulf Jentzen, and Timo Welti.
\newblock Solving high-dimensional optimal stopping problems using deep learning.
\newblock {\em European Journal of Applied Mathematics}, 32(3):470--514, 2021.

\bibitem{bettencourt2019taylormode}
Jesse Bettencourt, Matthew~J. Johnson, and David Duvenaud.
\newblock Taylor-mode automatic differentiation for higher-order derivatives in {JAX}.
\newblock In {\em Program Transformations for ML Workshop at NeurIPS 2019}, 2019.

\bibitem{jax2018github}
James Bradbury, Roy Frostig, Peter Hawkins, Matthew~James Johnson, Chris Leary, Dougal Maclaurin, George Necula, Adam Paszke, Jake Vander{P}las, Skye Wanderman-{M}ilne, and Qiao Zhang.
\newblock {JAX}: composable transformations of {P}ython+{N}um{P}y programs, 2018.

\bibitem{cai2021physics}
Shengze Cai, Zhiping Mao, Zhicheng Wang, Minglang Yin, and George~Em Karniadakis.
\newblock Physics-informed neural networks (pinns) for fluid mechanics: A review.
\newblock {\em Acta Mechanica Sinica}, 37(12):1727--1738, 2021.

\bibitem{chan2019machine}
Quentin Chan-Wai-Nam, Joseph Mikael, and Xavier Warin.
\newblock Machine learning for semi linear pdes.
\newblock {\em Journal of scientific computing}, 79(3):1667--1712, 2019.

\bibitem{chen2021solving}
Xiaoli Chen, Liu Yang, Jinqiao Duan, and George~Em Karniadakis.
\newblock Solving inverse stochastic problems from discrete particle observations using the fokker--planck equation and physics-informed neural networks.
\newblock {\em SIAM Journal on Scientific Computing}, 43(3):B811--B830, 2021.

\bibitem{cho2022separable}
Junwoo Cho, Seungtae Nam, Hyunmo Yang, Seok-Bae Yun, Youngjoon Hong, and Eunbyung Park.
\newblock Separable pinn: Mitigating the curse of dimensionality in physics-informed neural networks.
\newblock {\em arXiv preprint arXiv:2211.08761}, 2022.

\bibitem{darbon2020overcoming}
J{\'e}r{\^o}me Darbon, Gabriel~P Langlois, and Tingwei Meng.
\newblock Overcoming the curse of dimensionality for some hamilton--jacobi partial differential equations via neural network architectures.
\newblock {\em Research in the Mathematical Sciences}, 7:1--50, 2020.

\bibitem{darbon2016algorithms}
J{\'e}r{\^o}me Darbon and Stanley Osher.
\newblock Algorithms for overcoming the curse of dimensionality for certain hamilton--jacobi equations arising in control theory and elsewhere.
\newblock {\em Research in the Mathematical Sciences}, 3(1):19, 2016.

\bibitem{haghighat2021physics}
Ehsan Haghighat, Maziar Raissi, Adrian Moure, Hector Gomez, and Ruben Juanes.
\newblock A physics-informed deep learning framework for inversion and surrogate modeling in solid mechanics.
\newblock {\em Computer Methods in Applied Mechanics and Engineering}, 379:113741, 2021.

\bibitem{han2018solving}
Jiequn Han, Arnulf Jentzen, and Weinan E.
\newblock Solving high-dimensional partial differential equations using deep learning.
\newblock {\em Proceedings of the National Academy of Sciences}, 115(34):8505--8510, 2018.

\bibitem{han2017deep}
Jiequn Han, Arnulf Jentzen, et~al.
\newblock Deep learning-based numerical methods for high-dimensional parabolic partial differential equations and backward stochastic differential equations.
\newblock {\em Communications in mathematics and statistics}, 5(4):349--380, 2017.

\bibitem{he2023learning}
Di~He, Shanda Li, Wenlei Shi, Xiaotian Gao, Jia Zhang, Jiang Bian, Liwei Wang, and Tie-Yan Liu.
\newblock Learning physics-informed neural networks without stacked back-propagation.
\newblock In {\em International Conference on Artificial Intelligence and Statistics}, pages 3034--3047. PMLR, 2023.

\bibitem{henry2017deep}
Pierre Henry-Labordere.
\newblock Deep primal-dual algorithm for bsdes: Applications of machine learning to cva and im.
\newblock {\em Available at SSRN 3071506}, 2017.

\bibitem{hu2022augmented}
Zheyuan Hu, Ameya~D Jagtap, George~Em Karniadakis, and Kenji Kawaguchi.
\newblock Augmented physics-informed neural networks (apinns): A gating network-based soft domain decomposition methodology.
\newblock {\em arXiv preprint arXiv:2211.08939}, 2022.

\bibitem{hu2021extended}
Zheyuan Hu, Ameya~D. Jagtap, George~Em Karniadakis, and Kenji Kawaguchi.
\newblock When do extended physics-informed neural networks (xpinns) improve generalization?
\newblock {\em SIAM Journal on Scientific Computing}, 44(5):A3158--A3182, 2022.

\bibitem{hu2023tackling}
Zheyuan Hu, Khemraj Shukla, George~Em Karniadakis, and Kenji Kawaguchi.
\newblock Tackling the curse of dimensionality with physics-informed neural networks.
\newblock {\em arXiv preprint arXiv:2307.12306}, 2023.

\bibitem{hu2023bias}
Zheyuan Hu, Zhouhao Yang, Yezhen Wang, George~Em Karniadakis, and Kenji Kawaguchi.
\newblock Bias-variance trade-off in physics-informed neural networks with randomized smoothing for high-dimensional pdes.
\newblock {\em arXiv preprint arXiv:2311.15283}, 2023.

\bibitem{hu2024score}
Zheyuan Hu, Zhongqiang Zhang, George~Em Karniadakis, and Kenji Kawaguchi.
\newblock Score-based physics-informed neural networks for high-dimensional fokker-planck equations.
\newblock {\em arXiv preprint arXiv:2402.07465}, 2024.

\bibitem{hure2020deep}
C{\^o}me Hur{\'e}, Huy{\^e}n Pham, and Xavier Warin.
\newblock Deep backward schemes for high-dimensional nonlinear pdes.
\newblock {\em Mathematics of Computation}, 89(324):1547--1579, 2020.

\bibitem{hutchinson1989stochastic}
Michael~F Hutchinson.
\newblock A stochastic estimator of the trace of the influence matrix for laplacian smoothing splines.
\newblock {\em Communications in Statistics-Simulation and Computation}, 18(3):1059--1076, 1989.

\bibitem{hutzenthaler2020overcoming}
Martin Hutzenthaler, Arnulf Jentzen, Thomas Kruse, Tuan Anh~Nguyen, and Philippe von Wurstemberger.
\newblock Overcoming the curse of dimensionality in the numerical approximation of semilinear parabolic partial differential equations.
\newblock {\em Proceedings of the Royal Society A}, 476(2244):20190630, 2020.

\bibitem{hutzenthaler2021multilevel}
Martin Hutzenthaler, Arnulf Jentzen, Thomas Kruse, et~al.
\newblock Multilevel picard iterations for solving smooth semilinear parabolic heat equations.
\newblock {\em Partial Differential Equations and Applications}, 2(6):1--31, 2021.

\bibitem{jagtap2020extended}
Ameya~D Jagtap and George~Em Karniadakis.
\newblock Extended physics-informed neural networks (xpinns): A generalized space-time domain decomposition based deep learning framework for nonlinear partial differential equations.
\newblock {\em Communications in Computational Physics}, 28(5):2002--2041, 2020.

\bibitem{jagtap2020adaptive}
Ameya~D Jagtap, Kenji Kawaguchi, and George~Em Karniadakis.
\newblock Adaptive activation functions accelerate convergence in deep and physics-informed neural networks.
\newblock {\em Journal of Computational Physics}, 404:109136, 2020.

\bibitem{jagtap2022deep}
Ameya~D Jagtap, Dimitrios Mitsotakis, and George~Em Karniadakis.
\newblock Deep learning of inverse water waves problems using multi-fidelity data: Application to serre--green--naghdi equations.
\newblock {\em Ocean Engineering}, 248:110775, 2022.

\bibitem{ji2020three}
Shaolin Ji, Shige Peng, Ying Peng, and Xichuan Zhang.
\newblock Three algorithms for solving high-dimensional fully coupled fbsdes through deep learning.
\newblock {\em IEEE Intelligent Systems}, 35(3):71--84, 2020.

\bibitem{jin2021nsfnets}
Xiaowei Jin, Shengze Cai, Hui Li, and George~Em Karniadakis.
\newblock Nsfnets (navier-stokes flow nets): Physics-informed neural networks for the incompressible navier-stokes equations.
\newblock {\em Journal of Computational Physics}, 426:109951, 2021.

\bibitem{karniadakis2021physics}
George~Em Karniadakis, Ioannis~G Kevrekidis, Lu~Lu, Paris Perdikaris, Sifan Wang, and Liu Yang.
\newblock Physics-informed machine learning.
\newblock {\em Nature Reviews Physics}, 3(6):422--440, 2021.

\bibitem{kawaguchi2016deep}
Kenji Kawaguchi.
\newblock Deep learning without poor local minima.
\newblock In {\em Advances in neural information processing systems (NeurIPS)}, pages 586--594, 2016.

\bibitem{kawaguchi2023does}
Kenji Kawaguchi, Zhun Deng, Xu~Ji, and Jiaoyang Huang.
\newblock How does information bottleneck help deep learning?
\newblock {\em arXiv preprint arXiv:2305.18887}, 2023.

\bibitem{kawaguchi2017generalization}
Kenji Kawaguchi, Leslie~Pack Kaelbling, and Yoshua Bengio.
\newblock Generalization in deep learning.
\newblock {\em Cambridge University Press}, 2022.

\bibitem{kingma2014adam}
Diederik~P Kingma and Jimmy Ba.
\newblock Adam: A method for stochastic optimization.
\newblock {\em ICLR}, 2015.

\bibitem{lu2021physics}
Lu~Lu, Raphael Pestourie, Wenjie Yao, Zhicheng Wang, Francesc Verdugo, and Steven~G Johnson.
\newblock Physics-informed neural networks with hard constraints for inverse design.
\newblock {\em SIAM Journal on Scientific Computing}, 43(6):B1105--B1132, 2021.

\bibitem{meyer2021hutch++}
Raphael~A Meyer, Cameron Musco, Christopher Musco, and David~P Woodruff.
\newblock Hutch++: Optimal stochastic trace estimation.
\newblock In {\em Symposium on Simplicity in Algorithms (SOSA)}, pages 142--155. SIAM, 2021.

\bibitem{mishra2020estimates}
Siddhartha Mishra and Roberto Molinaro.
\newblock Estimates on the generalization error of physics informed neural networks (pinns) for approximating pdes.
\newblock {\em arXiv preprint arXiv:2006.16144}, 2020.

\bibitem{oktay2021randomized}
Deniz Oktay, Nick McGreivy, Joshua Aduol, Alex Beatson, and Ryan~P Adams.
\newblock Randomized automatic differentiation.
\newblock In {\em International Conference on Learning Representations}, 2021.

\bibitem{persson2022improved}
David Persson, Alice Cortinovis, and Daniel Kressner.
\newblock Improved variants of the hutch++ algorithm for trace estimation.
\newblock {\em SIAM Journal on Matrix Analysis and Applications}, 43(3):1162--1185, 2022.

\bibitem{psaros2022meta}
Apostolos~F Psaros, Kenji Kawaguchi, and George~Em Karniadakis.
\newblock Meta-learning pinn loss functions.
\newblock {\em Journal of computational physics}, 458:111121, 2022.

\bibitem{raissi2018forward}
Maziar Raissi.
\newblock Forward-backward stochastic neural networks: Deep learning of high-dimensional partial differential equations.
\newblock {\em arXiv preprint arXiv:1804.07010}, 2018.

\bibitem{raissi2019physics}
Maziar Raissi, Paris Perdikaris, and George~E Karniadakis.
\newblock Physics-informed neural networks: A deep learning framework for solving forward and inverse problems involving nonlinear partial differential equations.
\newblock {\em Journal of Computational Physics}, 378:686--707, 2019.

\bibitem{roosta2015improved}
Farbod Roosta-Khorasani and Uri Ascher.
\newblock Improved bounds on sample size for implicit matrix trace estimators.
\newblock {\em Foundations of Computational Mathematics}, 15(5):1187--1212, 2015.

\bibitem{shin2020convergence}
Yeonjong Shin, Jerome Darbon, and George~Em Karniadakis.
\newblock On the convergence of physics informed neural networks for linear second-order elliptic and parabolic type pdes.
\newblock {\em arXiv preprint arXiv:2004.01806}, 2020.

\bibitem{sirignano2018dgm}
Justin Sirignano and Konstantinos Spiliopoulos.
\newblock Dgm: A deep learning algorithm for solving partial differential equations.
\newblock {\em Journal of computational physics}, 375:1339--1364, 2018.

\bibitem{skorski2021modern}
Maciej Skorski.
\newblock Modern analysis of hutchinson's trace estimator.
\newblock In {\em 2021 55th Annual Conference on Information Sciences and Systems (CISS)}, pages 1--5. IEEE, 2021.

\bibitem{song2021scorebased}
Yang Song, Jascha Sohl-Dickstein, Diederik~P Kingma, Abhishek Kumar, Stefano Ermon, and Ben Poole.
\newblock Score-based generative modeling through stochastic differential equations.
\newblock In {\em International Conference on Learning Representations}, 2021.

\bibitem{vaswani2017attention}
Ashish Vaswani, Noam Shazeer, Niki Parmar, Jakob Uszkoreit, Llion Jones, Aidan~N Gomez, {\L}ukasz Kaiser, and Illia Polosukhin.
\newblock Attention is all you need.
\newblock {\em Advances in neural information processing systems}, 30, 2017.

\bibitem{wang20222}
Chuwei Wang, Shanda Li, Di~He, and Liwei Wang.
\newblock Is \$l{\textasciicircum}2\$ physics informed loss always suitable for training physics informed neural network?
\newblock In Alice~H. Oh, Alekh Agarwal, Danielle Belgrave, and Kyunghyun Cho, editors, {\em Advances in Neural Information Processing Systems}, 2022.

\bibitem{wang2022tensor}
Yifan Wang, Pengzhan Jin, and Hehu Xie.
\newblock Tensor neural network and its numerical integration.
\newblock {\em arXiv preprint arXiv:2207.02754}, 2022.

\bibitem{wang2022solving}
Yifan Wang, Yangfei Liao, and Hehu Xie.
\newblock Solving schr$\backslash$"$\{$o$\}$ dinger equation using tensor neural network.
\newblock {\em arXiv preprint arXiv:2209.12572}, 2022.

\bibitem{Weinan2017TheDR}
E.~Weinan and Ting Yu.
\newblock The deep ritz method: A deep learning-based numerical algorithm for solving variational problems.
\newblock {\em Communications in Mathematics and Statistics}, 6:1--12, 2017.

\bibitem{yang2019adversarial}
Yibo Yang and Paris Perdikaris.
\newblock Adversarial uncertainty quantification in physics-informed neural networks.
\newblock {\em Journal of Computational Physics}, 394:136--152, 2019.

\bibitem{yu2022gradient}
Jeremy Yu, Lu~Lu, Xuhui Meng, and George~Em Karniadakis.
\newblock Gradient-enhanced physics-informed neural networks for forward and inverse pde problems.
\newblock {\em Computer Methods in Applied Mechanics and Engineering}, 393:114823, 2022.

\bibitem{zang2020weak}
Yaohua Zang, Gang Bao, Xiaojing Ye, and Haomin Zhou.
\newblock Weak adversarial networks for high-dimensional partial differential equations.
\newblock {\em Journal of Computational Physics}, 411:109409, 2020.

\end{thebibliography}

\end{document}